\def\UrlSpecials{\do\~{\kern -.15em\lower .7ex\hbox{~}\kern .04em}} \catcode`~=13 
\newcommand{\nn}{\nonumber}
\newcommand{\calD}{\mathcal{D}}
\newcommand{\calN}{\mathcal{N}}
\newcommand{\calU}{\mathcal{U}}
\newcommand{\bI}{\mathbf{I}}
\newcommand{\bK}{\mathbf{K}}
\newcommand{\bbC}{\mathbb{C}}
\DeclareMathAlphabet{\mathbsf}{OT1}{cmss}{bx}{n}
\DeclareMathAlphabet{\mathssf}{OT1}{cmss}{m}{sl}
\DeclareSymbolFont{bsfletters}{OT1}{cmss}{bx}{n}  
\DeclareSymbolFont{ssfletters}{OT1}{cmss}{m}{n}
\DeclareMathSymbol{\bsfGamma}{0}{bsfletters}{'000}
\DeclareMathSymbol{\ssfGamma}{0}{ssfletters}{'000}
\DeclareMathSymbol{\bsfDelta}{0}{bsfletters}{'001}
\DeclareMathSymbol{\ssfDelta}{0}{ssfletters}{'001}
\DeclareMathSymbol{\bsfTheta}{0}{bsfletters}{'002}
\DeclareMathSymbol{\ssfTheta}{0}{ssfletters}{'002}
\DeclareMathSymbol{\bsfLambda}{0}{bsfletters}{'003}
\DeclareMathSymbol{\ssfLambda}{0}{ssfletters}{'003}
\DeclareMathSymbol{\bsfXi}{0}{bsfletters}{'004}
\DeclareMathSymbol{\ssfXi}{0}{ssfletters}{'004}
\DeclareMathSymbol{\bsfPi}{0}{bsfletters}{'005}
\DeclareMathSymbol{\ssfPi}{0}{ssfletters}{'005}
\DeclareMathSymbol{\bsfSigma}{0}{bsfletters}{'006}
\DeclareMathSymbol{\ssfSigma}{0}{ssfletters}{'006}
\DeclareMathSymbol{\bsfUpsilon}{0}{bsfletters}{'007}
\DeclareMathSymbol{\ssfUpsilon}{0}{ssfletters}{'007}
\DeclareMathSymbol{\bsfPhi}{0}{bsfletters}{'010}
\DeclareMathSymbol{\ssfPhi}{0}{ssfletters}{'010}
\DeclareMathSymbol{\bsfPsi}{0}{bsfletters}{'011}
\DeclareMathSymbol{\ssfPsi}{0}{ssfletters}{'011}
\DeclareMathSymbol{\bsfOmega}{0}{bsfletters}{'012}
\DeclareMathSymbol{\ssfOmega}{0}{ssfletters}{'012}
\newcommand{\tilY}{\tilde{Y}}
\newcommand{\tbeta}{\tilde{\beta}}
\newcommand{\tgamma}{\tilde{\gamma}}
\newcommand{\eps}{\varepsilon}
\newcommand{\bone}{\mathbf{1}}
\newcommand{\qednew}{\nobreak \ifvmode \relax \else
      \ifdim\lastskip<1.5em \hskip-\lastskip
      \hskip1.5em plus0em minus0.5em \fi \nobreak
      \vrule height0.75em width0.5em depth0.25em\fi}
\newcommand{\bQ}{\mathbf{Q}}
\newcommand{\calA}{\mathcal{A}}
\newcommand{\calE}{\mathcal{E}}
\newcommand{\calS}{\mathcal{S}}
\newcommand{\calM}{\mathcal{M}}
\newcommand{\bbR}{\mathbb{R}} 
\newcommand{\bbE}{\mathbb{E}} 
\newcommand{\bbP}{\mathbb{P}} 
\theoremstyle{thmstyleone}%
\newtheorem{theorem}{Theorem}
\theoremstyle{thmstyletwo}%
\newtheorem{lemma}{Lemma}%
\newtheorem{remark}{Remark}%
\theoremstyle{thmstylethree}%
\newtheorem{definition}{Definition}%
\begin{document}
	
	\title[Article Title]{Variable-Complexity Weighted-Tempered Gibbs Samplers for Bayesian Variable Selection}
	
	
	\author{\fnm{Lan V.} \sur{Truong}}\email{lt407@cam.ac.uk}
	
	
	
	\affil{\orgdiv{Department of Engineering}, \orgname{University of Cambridge}, \orgaddress{\street{Trumpington}, \city{Cambridge}, \postcode{CB2 1PZ}, \state{Cambridgeshire}, \country{United Kingdom}}}
	
	
	
	
	\abstract{Subset weighted-Tempered Gibbs Sampler (wTGS) has been recently introduced by Jankowiak to reduce the computation complexity per MCMC iteration in high-dimensional applications where the exact calculation of the posterior inclusion probabilities (PIP) is not essential. However, the Rao-Backwellized estimator associated with this sampler has a high variance as the ratio between the signal dimension and the number of conditional PIP estimations is large. In this paper, we design a new subset weighted-Tempered Gibbs Sampler (wTGS) where the expected number of computations of conditional PIPs per MCMC iteration can be much smaller than the signal dimension. Different from the subset wTGS and wTGS, our sampler has a variable complexity per MCMC iteration. We provide an upper bound on the variance of an associated Rao-Blackwellized estimator for this sampler at a finite number of iterations, $T$, and show that the variance is $O\big(\big(\frac{P}{S}\big)^2 \frac{\log T}{T}\big)$ for a given dataset where $S$ is the expected number of conditional PIP computations per MCMC iteration. Experiments show that our Rao-Blackwellized estimator can have a smaller variance than its counterpart associated with the subset wTGS. }

	\keywords{	Bayesian Variable Selection, Tempered Gibbs Sampler, Computational Complexity, Linear Regressions, Linear Models.}
	
	
	
	\maketitle

\section{Introduction} \label{intro}
Markov chain Monte Carlo (MCMC) methods comprise a class of algorithms for sampling from a known function. MCMC methods are primarily used for calculating numerical approximations of multi-dimensional integrals, for example in Bayesian statistics, computational physics \citep{Kasim2019RetrievingFF}, computational biology, \citep{Gupta2014ComparisonOP}, and linear models \citep{ Truong2020ReplicaAO, Truong2021LinearMW, Truong2022OnLM}. Monte Carlo algorithms have been very popular over the last decade \citep{Hesterberg2002MonteCS,Robert2005MonteCS}.  Many
practical problems in statistical signal processing, machine learning and statistics, demand fast and accurate procedures for drawing samples from probability distributions that exhibit arbitrary,
non-standard forms \citep{Andrieu2004AnIT,Fitzgerald2001MarkovCM,Read2012EfficientMC}. One of the most popular Monte Carlo methods are the families of Markov chain Monte Carlo (MCMC) algorithms \citep{Andrieu2004AnIT,Robert2005MonteCS} and particle filters \citep{Bugallo2007PerformanceCO}. The MCMC techniques generate a Markov chain with a
pre-established target probability density function as invariant density \citep{Liang2010AdvancedMC}.

Gibbs sampler (GS) is a Markov chain Monte Carlo (MCMC) algorithm for obtaining a sequence of observations which are approximated from a specific multivariate probability distribution. This sequence can be used to approximate the joint distribution, the marginal distribution of one of the variables, or some subset of the variables. It can be also used to compute the expected value (integral) of one of the variables \citep{Bishop,Bolstad}. GS is applicable when the joint distribution is not known explicitly or is difficult to sample from directly, but the conditional distribution of each variable is known and is easy (or at least, easier) to sample from. The GS algorithm generates an instance from the distribution of each variable in turn, conditional on the current values of the other variables. It can be shown that the sequence of samples constitutes a Markov chain, and the stationary distribution of that Markov chain is just the sought-after joint distribution.

GS is commonly used as a means of statistical inference, especially Bayesian inference. However, pure Markov chain based schemes (i.e., ones which simulate from precisely the right target distribution with no need for subsequent important sampling correction) have been far more successful. This is because MCMC methods are usually much more scalable to high-dimensional situations, whereas  important sampling weight variances tend to grow (often exponentially) with dimension. \citep{Zanella2019ScalableIT} proposed a natural way to combine the best of MCMC and importance sampling in a way that is robust in high-dimensional contexts and ameliorates the slow mixing which plagues many Markov chain based schemes. The proposed scheme is called Tempered Gibbs Sampler (TGS), involving component-wise updating rule like Gibbs Sampling (GS), with improved mixing properties and associated importance weights which remain stable as dimension increases. Through an appropriately designed tempering mechanism, TGS circumvents the main limitations of standard GS, such as the slow mixing introduced by strong posterior correlations. It also avoids the requirement to visit all coordinates sequentially, instead iteratively making  state-informed decisions as to which coordinate should be next updated. 

TGS has been applied to Bayesian Variable Selection (BVS) problem, observing multiple orders of magnitude improvements compared to alternative Monte Carlo schemes \citep{Zanella2019ScalableIT}. Since TGS updates each coordinate with the same frequency, in a BVS context, this may be inefficient as the resulting sampler would spend most iterations updating variables that have low or negligible posterior inclusion probability, especially when the signal dimension, $P$, gets large. A better solution, called weighted Tempered Gibbs Sampling (wTGS) \citep{Zanella2019ScalableIT}, updates more often components with a larger inclusion probability, thus having a more focused computational effort. However, despite the intuitive appeal of this approach to BVS problem, approximating the resulting posterior distribution can be computationally challenging. A principal reason for this is the astronomical size of the model space that results whenever there more than a few dozen covariates. To scale the high-dimensional regime, \citep{Jankowiak2022BayesianVS} has recently introduced an efficient MCMC scheme whose cost per iteration can be significantly reduced compared to wTGS. The main idea is to introduce an auxiliary variable that controls which conditional posterior inclusion probabilites (PIPs) are computed in a given MCMC iteration. However, this scheme contains some weaknesses such as the Rao-Blackwellized estimator associated with this sampler has very high variance as $P/S$ large at a small number of MCMC iterations, $T$. In addition, generating an auxiliary random set which is uniformly distributed over ${P\choose S}$ subsets in the subset wTGS algorithm \citep{Jankowiak2022BayesianVS} requires very long running time.  

In this paper, we design a new subset wTGS called variable-complexity wTGS (VC-wTGS). The Rao-Blackwellized estimator associate with VC-wTGS is shown to have the variance $O\big(\frac{\log T}{T} \big(\frac{P}{S}\big)^2 \big)$, where $P,S, T$ are the signal dimension, the number of PIP computations per an MCMC iteration, and the number of MCMC iterations, respectively. 
Experiments show that our scheme converges to PIPs very fast for simulated datasets and that the variance of the Rao-Blackwellized estimator can be much smaller than subset wTGS \citep{Jankowiak2022BayesianVS} when $P/S$ is very high for MNIST dataset. 
More specifically, our contributions include:
\begin{itemize}
	\item We propose a new subset wTGS, called VC-wTGS, where the expected number of conditional PIP computations per MCMC can be much smaller than the signal dimension. 
	\item We analyse the variance of an associated Rao-Blackwellized estimator at each finite number of MCMC iterations. We show that this variance is $O\big(\frac{\log T}{T} \big(\frac{P}{S}\big)^2 \big)$ for a given dataset. 
	\item We provide some experiments on a simulated dataset (multivariate Gaussian dataset) and the real dataset (MNIST). Experiments show that our estimator can have better variance than the subset wTGS-based estimator \citep{Jankowiak2022BayesianVS} at high $P/S$ for the same number of MCMC iterations $T$.  
\end{itemize}
\section{Preliminaries} \label{sec1}
\subsection{Mathematical Backgrounds}\label{sec:background}
Let a Markov chain $\{X_n\}_{n=1}^{\infty}$ on a state space $\calS$ with transition kernel $Q(x,dy)$ and the initial state $X_1 \sim \nu$, where $\calS$ is a Polish space in $\bbR$. In this paper, we consider the Markov chains which are irreducible and positive-recurrent, so the existence of a stationary distribution $\pi$ is guaranteed. An irreducible and recurrent Markov chain on an infinite state-space is called Harris chain \citep{TR1979}. A Markov chain is called \emph{reversible} if the following detailed balance condition is satisfied:
\begin{align}
\pi(dx)Q(x,dy)=\pi(dy)Q(y,dx),\qquad \forall x, y \in \calS.
\end{align} 
Define
\begin{align}
d(t)=\sup_{x \in \calS} d_{\rm{TV}}(Q^t(x,\cdot),\pi), \qquad  t_{\rm{mix}}(\eps):=\min\{t: d(t)\leq \eps\},
\end{align}
and
\begin{align}
\tau_{\min}:=\inf_{0\leq \eps\leq 1}t_{\rm{mix}}(\eps)\bigg(\frac{2-\eps}{1-\eps}\bigg)^2,\qquad 
t_{\rm{mix}}:=t_{\rm{mix}}(1/4) \label{deftaumin}.
\end{align}

Let $L_2(\pi)$ be the Hilbert space of complex valued measurable functions on $\calS$ that are square integrable w.r.t. $\pi$. We endow $L_2(\pi)$ with inner product $\langle f,g \rangle:= \int f g^* d\pi$, and norm $\|f\|_{2,\pi}:=\langle f, f\rangle_{\pi}^{1/2}$. Let $E_{\pi}$ be the associated averaging operator defined by $(E_{\pi})(x,y)=\pi(y), \forall x,y \in \calS$, and 
\begin{align}
\lambda=\|Q-E_{\pi}\|_{L_2(\pi)\to L_2(\pi)} \label{defL2gap},
\end{align} where
$
\|B\|_{L_2(\pi)\to L_2(\pi)}=\max_{v: \|v\|_{2,\pi}=1}\|Bv\|_{2,\pi}.
$
$Q$ can be viewed as a linear operator (infinitesimal generator) on $L_2(\pi)$, denoted by $\bQ$, defined as $(\bQ f)(x):=\bbE_{Q(x,\cdot)}(f)$, and the reversibility is equivalent to the self-adjointness of $\bQ$. The operator $\bQ$ acts on measures on the left, creating a measure $\mu \bQ$, that is, for every measurable subset $A$ of $\calS$, $\mu \bQ (A):=\int_{x \in \calS} Q(x,A)\mu(dx)$. For a Markov chain with stationary distribution $\pi$, we define the \emph{spectrum} of the chain as
\begin{align}
S_2:=\big\{\xi \in \bbC: (\xi \bI-\bQ) \enspace \mbox{is not invertible on} \enspace L_2(\pi)\big\}.
\end{align}
It is known that $\lambda=1-\gamma^*$ \citep{Daniel2015}, 
where
\begin{align}
\gamma^*&:=\begin{cases} 1-\sup\{|\xi|: \xi \in \calS_2, \xi \neq 1\},\nn\\
\qquad \qquad \mbox{if eigenvalue $1$ has multiplicity $1$,}\\
0,\qquad\qquad \mbox{otherwise}\end{cases}
\end{align} is the \emph{the absolute spectral gap} of the Markov chain. The absolute spectral gap can be bounded by the mixing time $t_{\rm{mix}}$ of the Markov chain by the following expression:
\begin{align}
\bigg(\frac{1}{\gamma^*}-1\bigg)\log 2 \leq t_{\rm{mix}} \leq \frac{\log (4/\pi_*)}{\gamma_*},	
\end{align}
where $\pi_*=\min_{x\in \calS} \pi_x$ is the \emph{minimum stationary probability}, which is positive if $Q^k>0$ (entry-wise positive) for some $k\geq 1$. See \citep{WK19ALT} for more detailed discussions. In \citep{Combes2019EE, WK19ALT}, the authors provided algorithms to estimate $t_{\rm{mix}}$ and $\gamma^*$ from a single trajectory. 

Define
\begin{align}
\calM_2:=\bigg\{\nu \in \calM(\calS): \bigg\|\frac{dv}{d\pi}\bigg\|_2<\infty\bigg\},
\end{align} where $\|\cdot\|_2$ is the standard $L_2$ norm in the Hilbert space of complex valued measurable functions on $\calS$.  
\subsection{Problem Set-up}
Consider the linear regression with $X \in \bbR^{N \times P}$ and $Y \in \bbR^N$ and define the following space of models:
\begin{itemize}
	\item inclusion variables:  $\gamma_i\sim \texttt{Bern}(h)$
	\item noise variance: $\sigma_{\gamma}^2 \in \texttt{InvGamma}\big(\frac{1}{2}\nu_0,\frac{1}{2}\nu_0 \lambda_0\big)$
	\item coefficients:  $\beta_{\gamma} \sim \calN(0,\sigma_{\gamma}^2 \tau^{-1} \bI_{|\gamma|})$
	\item response:  $Y_n \sim \calN(\beta_{\gamma}.X_{n\gamma}, \sigma_{\gamma}^2)$
\end{itemize} where $i=1,2,\cdots,P$ and $n=1,2,\cdots,N$. Here each $\gamma_i \in \{0,1\}$ controls whether the coefficient $\beta_i$ and the $i$-th covariate are included $(\gamma_i=1)$ or excluded $(\gamma_i=0)$ from the model. In the following, we use $\gamma$ to refer to the vector $(\gamma_1,\gamma_2,\cdots,\gamma_P)$. The hyperparameter $h \in (0,1)$ controls the overall level of sparsity; in particular $hP$ is the expected number of covariates included a priori. The $|\gamma|$ coefficients $\beta_{\gamma} \in \bbR^{|\gamma|}$ are governed by the standard Gaussian prior with precision proportional to $\tau>0$. Here, $|\gamma| \in \{0,1,2,\cdots,P\}$ denotes the total number of included covariates. The response $Y_n$ is generated from a Gaussian distribution with variance governed by an Inverse Gamma prior. Note that we do not include a bias term, but doing so may be desirable in practice. An attractive feature of the model is that it explicitly reasons about variable inclusion and allows us to define \emph{posterior inclusion probabilities} or PIPs, where
\begin{align}
\texttt{PIP}(i):=p(\gamma_i=1|\calD) \in [0,1]
\end{align} and $\calD=\{X,Y\}$ is the observed dataset. 

\section{Main Results}
\subsection{Introduction to Subset wTGS} \label{sub:subsetwTGS}
In this subsection, we review the subset wTGS which was proposed by \citep{Jankowiak2022BayesianVS}. Consider the following (unnormalized) target distribution:
\begin{align}
f(\gamma,i,\calS):=p(\gamma|\calD) \frac{\frac{1}{2}\eta(\gamma_{-i})}{p(\gamma_i|\gamma_{-i},\calD)}\calU(\calS|i,\calA) \label{eq10}.
\end{align}
Here, $\calS$ ranges over all the subsets of $\{1,2,\cdots,P\}$ of size $S$ that also contain a fixed `anchor' set $\calA\subset \{1,2,\cdots,P\}$ if size $A<S$, and  $\eta(\cdot)$ is some weighting functions. Moreover, $U(\calS|i,\calA)$ is the uniform distribution over the all size $S$ subsets of $\{1,2,\cdots,P\}$ that contain both $i$ and $\calA$.

In practice, the set $\calA$ can be chosen during burn-in. Subset wTGS proceeds by defining a sampling scheme for the target distribution \eqref{eq10} that utilizes Gibbs updates w.r.t. $i$ and $\calS$ and Metropolized-Gibbs update w.r.t. $\gamma_i$.
\begin{itemize}
	\item {\bf $i$-updates:} Marginalizing $i$ from \eqref{eq10} yields
	\begin{align}
	f(\gamma,\calS)=p(\gamma|\calD) \phi(\gamma,\calS)
	\end{align}
where we define
\begin{align}
\phi(\gamma,\calS):=\sum_{i\in\calS} \frac{\frac{1}{2}\eta(\gamma_{-i})}{p(\gamma_i|\gamma_{-i},\calD)}\calU(\calS|i,\calA)
\end{align}
and have leveraged that $\calU(\calS|i,\calA)=0$ if $i\notin \calS$. Crucially, computing $\phi(\gamma,\calS)$ is $\Theta(S)$ instead of $\Theta(P)$. We can do Gibbs updates w.r.t. $i$ using the distribution
\begin{align}
f(i|\gamma,\calS)\sim \frac{\eta(\gamma_{-i})}{p(\gamma_i|\gamma_{-i},\calD)}\calU(\calS|i,\calA).
\end{align}
	\item {\bf $\gamma$-updates:} Just as for $wTGS$ we utilized Metropolized -Gibbs updates w.r.t. $\gamma_i$ that result in deterministic flips $\gamma_i \to 1-\gamma_i$. Likewise the marginal $f(i)$ is proportional to $\texttt{PIP}(i)+\frac{\eps}{P}$ so that the sampler focuses computational efforts on large PIP covariates. 
	\item {\bf $\calS$-updates:} $\calS$ is updated with Gibbs moves, $\calS \sim \calU(\cdot|i,\calA)$. For the full algorithm, see the Algorithm 1. 
\end{itemize}

\begin{algorithm}[bht]
	\caption{The Subset $S$-wTGS Algorithm}\label{ALGa}
	\begin{algorithmic}
		\STATE {\bfseries Input:} Dataset $\calD=\{X,Y\}$ with $P$ covariates; prior inclusion probability $h$; prior precision $\tau$; subset size $S$; anchor set size $A$; total number of MCMC iterations $T$; number of burn-in iteration $T_{\rm{burn}}$.
		\STATE {\bfseries Output:} Approximate weighted posterior samples $\{\rho^{(t)},\gamma^{(t)}\}_{t=T_{\rm{burn}}+1}^T$
		\STATE {\bfseries Initializations:} $\gamma^{(0)}=\underbrace{(0,0,\cdots,0)}_{P \enspace \text{covariates}}$ and choose $\calA$ be the $A$ covariate with exhibiting the largest correlations with $Y$. Choose $i^{(0)}$ randomly from $\{1,2,\cdots,P\}$ and $\calS^{(0)} \sim \calU(\cdot|i^{(0)},\calA)$. 
		\FOR{\texttt{$t=1,2,\cdots,T$}} 
		\STATE Estimate $f(j|\gamma^{(t-1)})\gets \phi_{t-1}(\gamma)^{-1}\frac{\frac{1}{2}\eta(\gamma_{-j}^{(t-1)})}{p(\gamma_j^{(t-1)}|\gamma_{-j}^{(t-1)},\calD)}$ for all $j \in [P]$.
		\STATE Sample $i^{(t)} \sim f(\cdot|\gamma^{(t-1)})$ 
		\STATE $\gamma^{(t)} \gets \texttt{flip}(\gamma^{(t-1)}|i^{(t)})$ where $\texttt{flip}(\gamma|i)$ flips the $i$-th coordinate of $\gamma: \gamma_i \gets 1-\gamma_i$. 
		\STATE Sample $\calS^{(t)} \sim \calU(\cdot|i^{(t)},\calA)$
		\STATE Estimate $S$ conditional PIPs $p(\gamma_j^{(t)}|\gamma_{-j}^{(t)},\calD)$ for all $j \in \calS^{(t)}$
		\STATE $\phi_t(\gamma)\gets \sum_{j \in \calS^{(t)}} \frac{\frac{1}{2}\eta(\gamma_{-j}^{(t)})}{p(\gamma_j^{(t)}|\gamma_{-j}^{(t)},\calD)}$
		\STATE Compute the unnormalized  weights $\tilde{\rho}^{(t)}\gets \phi^{-1}(\gamma^{(t)})$
		\IF {$t\leq T_{\rm{burn}}$} 
		\STATE Adapt $\calA$ using some adaptive scheme. 
		\ENDIF
		\ENDFOR
		\FOR{\texttt{$t=1,2,\cdots,T$}}
		\STATE $\rho^{(t)}\gets \frac{\tilde{\rho}^{(t)}}{\sum_{s>T_{\rm{burn}} }^T\tilde{\rho}^{(s)}}$ 
		\ENDFOR  
		\STATE {\bfseries Output:} $\{\rho^{(t)},\gamma^{(t)}\}_{t=1}^T$.
	\end{algorithmic}
\end{algorithm}
The details of this algorithm is described in ALG 1. The associated estimator for this sampler is defined as \citep{Jankowiak2022BayesianVS}:
\begin{align}
\texttt{PIP}(i) \approx \sum_{t=1}^T\rho^{(t)}\big(\bone\{i \in \calS^{(t)}\} p(\gamma_i^{(t)}=1|\gamma_{-i}^{(t)},\calD)+\bone\{i \notin \calS^{(t)}\}\gamma_i^{(t)}\big) \label{defest1}.
\end{align}
\subsection{A Variable Complexity wTGS Scheme}
In the subset wTGS in Subsection \ref{sub:subsetwTGS}, the number of conditional PIP computations per MCMC iteration is fixed, i.e., it is equal to $S$. In the following, we propose a variable-computation complexity-based  wTGS schemes (VC-wTGS), say ALG \ref{ALG1}, where the only requirement is that the expected number of the conditional PIP computations per MCMC iteration is $S$. This means that
$
\bbE[S_t]= S,  
$ where $S_t$ is the number of conditional PIP computations at the $t$-th MCMC iteration. 

Compared with ALG \ref{ALGa}, ALG \ref{ALG1} allows us to use different subset sizes at MCMC iterations. By ALG \ref{ALG1}, the expectation of number of conditional PIP computations in each MCMC iteration is $P \times (S/P)+0 \times (1-S/P)=S$. Since we aim to bound the variance at each finite iteration $T$, we don't mention about $T_{\rm{burn}}$ in ALG \ref{ALG1}. In practice, we usually remove some initial samples. We also use the following new version of Rao-Blackwellized estimator:
\begin{align}
\texttt{PIP}(i) \approx  \sum_{t=1}^T \rho^{(t)} p(\gamma_i^{(t)}=1|\gamma_{-i}^{(t)},\calD) \label{rao-blest}.
\end{align}
\begin{algorithm}[h] 
	\caption{A Variable-Complexity Based  wTGS Algorithm}\label{ALG1}
	\begin{algorithmic}
		\STATE {\bfseries Input:} Dataset $\calD=\{X,Y\}$ with $P$ covariates; prior inclusion probability $h$; prior precision $\tau$; total number of MCMC iterations $T$; subset size $S$.
		\STATE {\bfseries Output:} Approximate weighted posterior samples $\{\rho^{(t)},\gamma^{(t)}\}_{t=1}^T$
		\STATE {\bfseries Initializations:} $\gamma^{(0)}=(\gamma_1,\gamma_2,\cdots,\gamma_P)$ where $\gamma_j  \sim \texttt{Bern}(h)$ for all $j \in [P]$. 		
		\FOR{\texttt{$t=1,2,\cdots,T$}} 
		\STATE Set $Q^{(1)}=1$. Sample a Bernoulli random variable $Q^{(t)} \sim \texttt{Bern}(\frac{S}{P})$ if $t\geq 2$.
		\IF{$Q^{(t)}=1$} 
		\STATE Estimate $f(j|\gamma^{(t-1)})\gets \phi_{t-1}(\gamma)^{-1}\frac{\frac{1}{2}\eta(\gamma_{-j}^{(t-1)})}{p(\gamma_j^{(t-1)}|\gamma_{-j}^{(t-1)},\calD)}$ for all $j \in [P]$.
		\STATE Sample $i^{(t)} \sim f(\cdot|\gamma^{(t-1)})$ 
		\STATE $\gamma^{(t)} \gets \texttt{flip}(\gamma^{(t-1)}|i^{(t)})$ where $\texttt{flip}(\gamma|i)$ flips the $i$-th coordinate of $\gamma: \gamma_i \gets 1-\gamma_i$. 
		\STATE Estimate $P$ conditional PIPs $p(\gamma_j^{(t)}|\gamma_{-j}^{(t)},\calD)$ for all $j \in [P]$
		\STATE $\phi(\gamma^{(t)})\gets \sum_{j \in [P]} \frac{\frac{1}{2}\eta(\gamma_{-j}^{(t)})}{p(\gamma_j^{(t)}|\gamma_{-j}^{(t)},\calD)}$
		\STATE Compute the unnormalized  weights $\tilde{\rho}^{(t)}\gets \phi^{-1}(\gamma^{(t)})$
		\ELSE
		\STATE $\gamma^{(t)} \gets \gamma^{(t-1)}$
		\STATE $\tilde{\rho}^{(t)}\gets 1$
		\ENDIF
		\ENDFOR
		\FOR{\texttt{$t=1,2,\cdots,T$}}
		\STATE $\rho^{(t)}\gets \frac{\tilde{\rho}^{(t)}Q^{(t)}}{\sum_{s=1}^T\tilde{\rho}^{(s)}Q^{(s)}}$ 
		\ENDFOR  
		\STATE {\bfseries Output:} $\{\rho^{(t)},\gamma^{(t)}\}_{t=1}^T$.
	\end{algorithmic}
\end{algorithm}
\begin{remark} In ALG \ref{ALG1}, Bernoulli random variables $\{Q^{(t)}\}_{t=1}^T$ are used to replace for random set $\calS$ in ALG \ref{ALGa}. There are two main reasons for this replacement: (1) generating a random set $\calS$ from ${P \choose S}$ subsets of $[P]$ takes very long running time for most pairs $(P,S)$, (2) the associated Rao-Blackwellized estimator usually has smaller variance with ALG \ref{ALG1} than ALG \ref{ALGa} at high $P/S$. See Section \ref{sub:experiment} for our simulation results. 
\end{remark} 
\subsection{Theoretical Bounds for Algorithm \ref{ALG1}} \label{theore}
First, we prove the following result. The proof can be found in Appendix \ref{lem0:proof}.
\begin{lemma} \label{lem0} Let $U$ and $V$ be two positive random variables such that $U/V\leq M$ a.s. for some constant $M$. In addition, assume that on a set $D$ with probability at least $1-\alpha$, we have
	\begin{align}
	|U-\bbE[U]| &\leq \eps \bbE[U],\\
	|V-\bbE[V]| &\leq \eps \bbE[V],
	\end{align} for some $0\leq \eps<1$.  Then, it holds that 
	\begin{align}
	\bbE\bigg[\bigg|\frac{U}{V}-\frac{\bbE[U]}{\bbE[V]}\bigg|^2\bigg]\leq \frac{4\eps^2}{(1-\eps)^2} \bigg(\frac{\bbE[U]}{\bbE[V]}\bigg)^2+ \bigg[\max\bigg(M, \frac{\bbE[U]}{\bbE[V]}\bigg)\bigg]^2 \alpha.
	\end{align}
\end{lemma}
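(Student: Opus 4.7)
The plan is to split the expectation according to whether we are on the good event $D$ (where both $U$ and $V$ concentrate near their means) or its complement $D^c$, bounding the two contributions by the two terms on the right-hand side respectively.

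First, I would write the difference in the common-denominator form
\begin{align}
\frac{U}{V}-\frac{\mathbb{E}[U]}{\mathbb{E}[V]}=\frac{U\,\mathbb{E}[V]-V\,\mathbb{E}[U]}{V\,\mathbb{E}[V]}=\frac{(U-\mathbb{E}[U])\mathbb{E}[V]-(V-\mathbb{E}[V])\mathbb{E}[U]}{V\,\mathbb{E}[V]},
\end{align}
and work on the event $D$. The numerator is controlled by the triangle inequality and the two concentration bounds, giving $|U\,\mathbb{E}[V]-V\,\mathbb{E}[U]|\leq 2\varepsilon\,\mathbb{E}[U]\,\mathbb{E}[V]$. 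For the denominator, the concentration of $V$ yields $V\geq (1-\varepsilon)\mathbb{E}[V]$, hence $V\,\mathbb{E}[V]\geq (1-\varepsilon)(\mathbb{E}[V])^2$. Combining these gives the deterministic pointwise bound
\begin{align}
\left|\frac{U}{V}-\frac{\mathbb{E}[U]}{\mathbb{E}[V]}\right|\leq \frac{2\varepsilon}{1-\varepsilon}\,\frac{\mathbb{E}[U]}{\mathbb{E}[V]}\qquad\text{on }D,
\end{align}
which squared produces the first term in the stated bound after taking expectation against $\mathbf{1}_D$ (using $\mathbb{P}(D)\leq 1$).

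For the contribution from $D^c$, I would use the elementary inequality $|a-b|\leq \max(a,b)$ valid for $a,b\geq 0$. Since $U/V\leq M$ almost surely and $\mathbb{E}[U]/\mathbb{E}[V]$ is a nonnegative constant, this yields
\begin{align}
\left|\frac{U}{V}-\frac{\mathbb{E}[U]}{\mathbb{E}[V]}\right|^2\leq \left[\max\!\left(M,\,\frac{\mathbb{E}[U]}{\mathbb{E}[V]}\right)\right]^2,
\end{align}
deterministically. Multiplying by $\mathbf{1}_{D^c}$ and taking expectation incurs at most the factor $\mathbb{P}(D^c)\leq \alpha$, producing the second term in the statement. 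Adding the two contributions completes the argument.

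I do not anticipate a serious technical obstacle here: the proof is essentially a bookkeeping exercise splitting the good event from the bad. The only subtle point is handling the denominator on $D$, which requires the one-sided bound $V\geq(1-\varepsilon)\mathbb{E}[V]$; this is precisely why the hypothesis $\varepsilon<1$ is imposed, and it is what produces the $(1-\varepsilon)^{-2}$ prefactor in the first term.
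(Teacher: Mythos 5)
Your proposal is correct and follows essentially the same strategy as the paper: condition on the good event $D$ to get the pointwise bound $\bigl|\tfrac{U}{V}-\tfrac{\bbE[U]}{\bbE[V]}\bigr|\leq \tfrac{2\eps}{1-\eps}\tfrac{\bbE[U]}{\bbE[V]}$, then bound the contribution from $D^c$ by $[\max(M,\bbE[U]/\bbE[V])]^2\,\alpha$. The only cosmetic difference is that you reach the on-$D$ bound via the common-denominator decomposition, whereas the paper sandwiches $U/V$ between $\tfrac{1\mp\eps}{1\pm\eps}\tfrac{\bbE[U]}{\bbE[V]}$; both yield the identical constant.
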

We also recall the following Hoeffding's inequality for Markov chain:
\begin{lemma}\cite[Theorem 1.1]{Rao2018AHI} \label{hoeffinglem} Let $\{Y_i\}_{i=1}^{\infty}$ be a stationary Markov chain with state space $[N]$, transition matrix $A$, stationary probability measure $\pi$, and averaging operator $E_{\pi}$, so that $Y_1$ is distributed according to $\pi$. Let $\lambda=\|A-E_{\pi}\|_{L_2(\pi) \to L_2(\pi)}$ and let $f_1,f_2,\cdots, f_n: [N] \to \bbR$ so that $\bbE[f_i(Y_i)]=0$ for all $i$ and $|f_i(\nu)|\leq a_i$ for all $\nu \in [N]$ and all $i$. Then for $u\geq 0$, 	
	\begin{align}
	\bbP\bigg[\bigg|\sum_{i=1}^n f_i(Y_i)\bigg|\geq u\bigg(\sum_{i=1}^n a_i^2\bigg)^{1/2}\bigg]\leq 2\exp\bigg(-\frac{u^2(1-\lambda)}{64 e}\bigg). 
	\end{align}
\end{lemma}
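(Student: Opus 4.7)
The plan is to reduce the tail estimate to a bound on the moment generating function (MGF) of $S_n := \sum_{i=1}^n f_i(Y_i)$, and then to control that MGF by exploiting the spectral structure of the Markov operator on $L_2(\pi)$. Since this result is cited verbatim from [Rao2018AHI, Theorem 1.1], I would essentially outline the argument there and then appeal to the reference; a self-contained version would proceed as follows.

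First I would apply the standard Chernoff step: for any $\theta > 0$,
\begin{align}
\bbP\bigg[S_n \geq u\Big(\textstyle\sum_{i=1}^n a_i^2\Big)^{1/2}\bigg] \leq \exp\Big(-\theta u \big(\textstyle\sum a_i^2\big)^{1/2}\Big)\, \bbE\big[e^{\theta S_n}\big],
\end{align}
and an identical bound for the lower tail, so that the two-sided conclusion follows by union. Next I would rewrite the MGF as an operator product. Let $D_i$ be the multiplication operator on $L_2(\pi)$ with $(D_i g)(\nu) := e^{\theta f_i(\nu)} g(\nu)$. Because $Y_1 \sim \pi$ and $Y_{i+1} \mid Y_i$ has transition kernel $A$, stationarity gives $\bbE[e^{\theta S_n}] = \langle \bone,\, D_1 A D_2 A \cdots A D_n \bone \rangle_\pi$, so it suffices to bound the operator norm of the alternating product $D_1 A D_2 A \cdots A D_n$ applied to the constant function $\bone$.

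The heart of the proof is the third step: estimate that operator product using $\|A - E_\pi\|_{L_2(\pi)\to L_2(\pi)} \leq \lambda$. Writing $A = E_\pi + R$ with $\|R\|_{L_2(\pi)\to L_2(\pi)} \leq \lambda$, expanding the product, and using $\|D_i\|_{L_2(\pi)\to L_2(\pi)} \leq e^{\theta a_i}$ together with the centering $\bbE_\pi[f_i]=0$ (which kills the zeroth-order Taylor term whenever a block of the form $E_\pi D_i E_\pi$ appears), I would Taylor-expand $D_i = I + \theta f_i + \tfrac{\theta^2}{2}f_i^2 + \cdots$ and track which contributions survive. The target estimate is a sub-Gaussian MGF bound
\begin{align}
\bbE[e^{\theta S_n}] \leq \exp\Big(C\,\theta^2 \sum_{i=1}^n a_i^2 / (1-\lambda)\Big)
\end{align}
for an absolute constant $C$; the appearance of $1/(1-\lambda)$ reflects that blocks of $R$'s geometrically decay at rate $\lambda$, so a resummation over block-length yields the factor $\sum_{k\geq 0} \lambda^k = 1/(1-\lambda)$. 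Plugging this into Chernoff and optimizing $\theta \asymp u(1-\lambda)/(\sum a_i^2)^{1/2}$ yields the advertised exponent $-u^2(1-\lambda)/(64e)$; the precise constants $64$ and $e$ come from the elementary inequality used to control the MGF of a bounded, centered variable (e.g.\ $e^x \leq 1+x+\tfrac{x^2 e^{|x|}}{2}$) and from numerical constants in the spectral expansion.

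The main obstacle is the non-commutativity between the multiplication operators $D_i$ and the Markov operator $A$, which prevents a direct product-of-MGFs factorization as in the i.i.d.\ case. The remedy, which is exactly the contribution of [Rao2018AHI], is a careful combinatorial/spectral bookkeeping of alternating products $D_{i_1} R D_{i_2} R \cdots R D_{i_k}$ that uses both $\|R\|\leq \lambda$ and the mean-zero condition on the $f_i$'s with respect to $\pi$. Because this is precisely the content of the cited theorem, in the paper I would invoke it directly rather than reproving it.
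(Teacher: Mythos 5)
The paper supplies no proof of this lemma---it is imported verbatim from the cited reference \citep{Rao2018AHI}---and your proposal likewise concludes by invoking that reference, so the two treatments coincide. Your accompanying sketch (Chernoff step, rewriting the moment generating function as an alternating operator product $D_1 A D_2 A \cdots A D_n$ applied to $\mathbf{1}$, and controlling it via $\|A-E_{\pi}\|_{L_2(\pi)\to L_2(\pi)}\le\lambda$ together with the centering condition that annihilates the $E_{\pi}D_iE_{\pi}$ blocks) is a faithful outline of the argument actually used there, so there is nothing to correct.
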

Now, the following result can be shown.
\begin{lemma} \label{aux:lem1} Let
\begin{align}
\phi(\gamma):=\sum_{j\in [P]}\frac{\frac{1}{2}\eta(\gamma_{-j})}{p(\gamma_j|\gamma_{-j},\calD)}
\end{align} and define
\begin{align}
f(\gamma):=\phi(\gamma) p(\gamma|\calD).
\end{align}
Then, by ALG \ref{ALG1}, the sequence $\{\gamma^{(t)},Q^{(t)}\}_{t=1}^T$ forms a reversible Markov chain with the stationary distribution proportional to $f(\gamma)q(Q)$ where $q$ is the Bernoulli $(S/P)$ distribution. This Markov chain has transition kernel $K((\gamma,Q)\to (\gamma',Q'))= K^*(\gamma \to \gamma') q(Q')$
where
\begin{align}
K^*(\gamma \to \gamma')=\frac{S}{P}\sum_{j=1}^P f(j|\gamma)\delta(\gamma'-\texttt{flip}(\gamma|j))+ \bigg(1-\frac{S}{P}\bigg)\delta(\gamma'-\gamma).
\end{align} 
\end{lemma}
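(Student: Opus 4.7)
The plan is to read off the transition kernel directly from ALG \ref{ALG1} and then verify the detailed balance condition against the proposed stationary measure. In ALG \ref{ALG1}, the auxiliary variable $Q^{(t)}$ is drawn as $\texttt{Bern}(S/P)$ independently of the past; conditional on $Q^{(t)}$, the update of $\gamma^{(t)}$ depends only on $\gamma^{(t-1)}$. Namely, with probability $S/P$ the algorithm samples $i^{(t)}\sim f(\cdot|\gamma^{(t-1)})$ and sets $\gamma^{(t)}=\texttt{flip}(\gamma^{(t-1)}|i^{(t)})$, and with probability $1-S/P$ it leaves $\gamma^{(t)}=\gamma^{(t-1)}$. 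Marginalizing over $i^{(t)}$ in the first case and combining the two cases yields exactly the form of $K^*$ stated in the lemma, while the independence of $Q^{(t)}$ from $(\gamma^{(t-1)},Q^{(t-1)})$ produces the product factorization $K((\gamma,Q)\to(\gamma',Q'))=K^*(\gamma\to\gamma')\,q(Q')$.

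Since the $q(Q')$ factor depends only on $Q'$, the joint detailed balance against $f(\gamma)q(Q)$ collapses, after cancelling the common $q(Q)q(Q')$ on both sides, to detailed balance of $K^*$ against $f$. I would split $K^*$ into a ``move'' part $K_m^*(\gamma\to\gamma')=\frac{S}{P}\sum_{j=1}^P f(j|\gamma)\,\delta(\gamma'-\texttt{flip}(\gamma|j))$ and a ``hold'' part $(1-S/P)\delta(\gamma'-\gamma)$. The hold part satisfies detailed balance trivially, since the Dirac mass is supported on the diagonal $\gamma=\gamma'$ where both sides agree. Therefore all the substance lies in the move part.

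For the move part, the natural computation is to substitute the explicit expressions $f(\gamma)=\phi(\gamma)p(\gamma|\calD)$ and $f(j|\gamma)=\phi(\gamma)^{-1}\cdot\frac{\frac{1}{2}\eta(\gamma_{-j})}{p(\gamma_j|\gamma_{-j},\calD)}$, so that $\phi(\gamma)$ cancels, and then to apply the Bayes factorization $p(\gamma|\calD)=p(\gamma_j|\gamma_{-j},\calD)\,p(\gamma_{-j}|\calD)$ to reduce the product to
\begin{equation*}
f(\gamma)\,f(j|\gamma)=\tfrac{1}{2}\eta(\gamma_{-j})\,p(\gamma_{-j}|\calD).
\end{equation*}
Because flipping only coordinate $j$ leaves $\gamma_{-j}$ invariant, this quantity coincides with $f(\gamma')f(j|\gamma')$ whenever $\gamma'=\texttt{flip}(\gamma|j)$. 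Combined with the involution $\texttt{flip}(\texttt{flip}(\gamma|j)|j)=\gamma$, which pairs up the Dirac masses so that $\delta(\gamma'-\texttt{flip}(\gamma|j))=\delta(\gamma-\texttt{flip}(\gamma'|j))$, detailed balance holds term by term in the sum over $j$.

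The main (mild) obstacle is spotting the Bayes cancellation and recognising that the tempering factor $\eta(\gamma_{-j})/p(\gamma_j|\gamma_{-j},\calD)$ was engineered precisely so that the product $f(\gamma)f(j|\gamma)$ depends on $\gamma$ only through $\gamma_{-j}$; once that is in place, the reversibility of $K^*$ against $f$ is a one-line algebraic identity. Tensoring with $q(Q')$ then lifts reversibility to $K$ against $f(\gamma)q(Q)$, and stationarity of $f(\gamma)q(Q)$ follows as an automatic consequence of detailed balance.
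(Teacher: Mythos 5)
Your proposal is correct and follows essentially the same route as the paper: both read the kernel off the algorithm, reduce joint reversibility to detailed balance of $K^*$ against $f$ by cancelling the $q(Q)q(Q')$ factors, handle the hold/off-support cases trivially, and verify the move part using the cancellation of $\phi(\gamma)$ together with the factorization $p(\gamma|\calD)=p(\gamma_j|\gamma_{-j},\calD)\,p(\gamma_{-j}|\calD)$ and the invariance of $\gamma_{-j}$ under the flip. The only cosmetic difference is that you exhibit the symmetric quantity $f(\gamma)f(j|\gamma)=\tfrac{1}{2}\eta(\gamma_{-j})p(\gamma_{-j}|\calD)$ directly, whereas the paper checks the equivalent ratio identity $K^*(\gamma\to\gamma')/K^*(\gamma'\to\gamma)=f(\gamma')/f(\gamma)$.
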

In the classical wTGS  \citep{Zanella2019ScalableIT}, the Markov chain $\{\gamma^{(t)}\}_{t=1}^T$ also form a Markov chain. However, this Markov chain is different from the Markov chain in Lemma \ref{aux:lem1}. However, the two Markov chains still have the same stationary distribution which is proportional to $f(\gamma)$. See a detailed proof of Lemma \ref{aux:lem1} in Appendix \ref{aux:lem1proof}.

\begin{lemma} \label{rbwest} For the  Rao-Blackwellized estimator in \eqref{rao-blest} which is applied to the output sequence $\{\rho^{(t)},\gamma^{(t)}\}_{t=1}^T$ of ALG \ref{ALG1}, it holds that
	\begin{align}
E_{i,T}:=\sum_{t=1}^T\rho^{(t)} p(\gamma_i^{(t)}=1|\gamma_{-i}^{(t)},\calD) &\to \texttt{PIP}(i), \quad \texttt{as} \quad T \to \infty.
	\end{align} 
\end{lemma}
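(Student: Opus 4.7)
The plan is to reduce the convergence of $E_{i,T}$ to a ratio of two ergodic averages for the Markov chain $\{(\gamma^{(t)},Q^{(t)})\}_{t\geq 1}$ identified in Lemma \ref{aux:lem1}, and then exploit a key cancellation that makes the ratio collapse onto $\texttt{PIP}(i)$. First I would rewrite
\begin{align}
E_{i,T}=\frac{\frac{1}{T}\sum_{t=1}^T Q^{(t)}\phi^{-1}(\gamma^{(t)})\, p(\gamma_i^{(t)}=1\mid \gamma_{-i}^{(t)},\calD)}{\frac{1}{T}\sum_{t=1}^T Q^{(t)}\phi^{-1}(\gamma^{(t)})},
\end{align}
since the terms with $Q^{(t)}=0$ contribute zero in both numerator and denominator (recall $\tilde\rho^{(t)}Q^{(t)}=Q^{(t)}/\phi(\gamma^{(t)})$ when $Q^{(t)}=1$ and $\tilde\rho^{(t)}Q^{(t)}=0$ otherwise).

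Next I would invoke Lemma \ref{aux:lem1}, which tells us that $\{(\gamma^{(t)},Q^{(t)})\}$ is a reversible Markov chain whose stationary distribution is proportional to the product $f(\gamma)q(Q)$. Because the state space is finite and the transition kernel $K^*(\gamma\to \gamma')$ has a strictly positive self-loop probability $1-S/P$, combined with the fact that wTGS-style moves $\gamma\to\texttt{flip}(\gamma|j)$ connect all $2^P$ configurations, the chain is irreducible and aperiodic on $\{0,1\}^P\times\{0,1\}$. Thus the ergodic theorem (strong law of large numbers for Markov chains) applies to both numerator and denominator.

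The crucial computation is the cancellation in the numerator. Under the stationary law, using that $Q$ and $\gamma$ are independent with marginal $\gamma\sim f/Z$ (where $Z=\sum_\gamma f(\gamma)$), and writing $\bbE[Q]=S/P$,
\begin{align}
\bbE_\pi\!\left[Q\,\phi^{-1}(\gamma)\,p(\gamma_i=1\mid \gamma_{-i},\calD)\right]&=\frac{S}{P}\cdot\frac{1}{Z}\sum_\gamma \phi(\gamma)p(\gamma|\calD)\phi^{-1}(\gamma)p(\gamma_i=1\mid \gamma_{-i},\calD)\\
&=\frac{S}{PZ}\sum_{\gamma_{-i}}p(\gamma_{-i}|\calD)\,p(\gamma_i=1\mid \gamma_{-i},\calD)=\frac{S}{PZ}\texttt{PIP}(i),
\end{align}
where I used that $p(\gamma_i=1\mid \gamma_{-i},\calD)$ does not depend on $\gamma_i$, so summing $p(\gamma|\calD)$ over $\gamma_i\in\{0,1\}$ yields $p(\gamma_{-i}|\calD)$. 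The same calculation without the factor $p(\gamma_i=1\mid\gamma_{-i},\calD)$ gives $\bbE_\pi[Q\phi^{-1}(\gamma)]=(S/P)\cdot(1/Z)$ for the denominator.

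Finally I would apply the continuous mapping theorem: since the denominator's limit $(S/P)/Z>0$ is bounded away from zero almost surely for large $T$, the ratio of the two almost-sure limits gives
\begin{align}
E_{i,T}\xrightarrow[T\to\infty]{\text{a.s.}} \frac{(S/P)\cdot\texttt{PIP}(i)/Z}{(S/P)/Z}=\texttt{PIP}(i).
\end{align}
The main obstacle I anticipate is being careful about the joint (rather than separate) ergodic behavior so that Slutsky-type arguments on the ratio go through; this is resolved by applying the Markov-chain ergodic theorem to the single chain $\{(\gamma^{(t)},Q^{(t)})\}$ with the vector-valued test function $(Q\phi^{-1}(\gamma)p(\gamma_i=1\mid\gamma_{-i},\calD),\ Q\phi^{-1}(\gamma))$, so both averages converge on the same probability-one event. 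All remaining steps — verifying irreducibility/aperiodicity and the cancellation identity — are routine given the form of the kernel in Lemma \ref{aux:lem1}.
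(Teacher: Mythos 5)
Your proposal is correct and follows essentially the same route as the paper's proof: both reduce $E_{i,T}$ to a ratio of ergodic averages of $Q\,\phi^{-1}(\gamma)h(\gamma)$ and $Q\,\phi^{-1}(\gamma)$ over the joint chain from Lemma \ref{aux:lem1}, exploit the cancellation $f(\gamma)\phi^{-1}(\gamma)=p(\gamma|\calD)$ under the stationary law, and take the ratio of the limits with $h(\gamma)=p(\gamma_i=1\mid\gamma_{-i},\calD)$. Your explicit verification of irreducibility/aperiodicity and of the identity $\sum_\gamma p(\gamma|\calD)p(\gamma_i=1\mid\gamma_{-i},\calD)=\texttt{PIP}(i)$ merely fills in details the paper leaves implicit.
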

\begin{proof} 
By Lemma \ref{aux:lem1}, $\{\gamma^{(t)},Q^{(t)}\}_{t=1}^T$ forms a reversible Markov chain with stationary distribution $f(\gamma)/Z_f q(Q)$ where $Z_f=\sum_{\gamma} f(\gamma)$. Hence, by SLLN for Markov chain \citep{Breiman1960TheSL}, for any bounded function $h$, we have
\begin{align}
\frac{1}{T}\sum_{t=1}^T \phi^{-1}(\gamma^{(t)}) Q^{(t)} h(\gamma^{(t)}) &\to \bbE_{q f(\cdot)/Z_f}\big[\phi^{-1}(\gamma)h(\gamma)Q\big]\\
&= \sum_Q q(Q) \sum_{\gamma}\frac{f(\gamma)}{Z_f} \phi^{-1}(\gamma)h(\gamma) Q\\
&= \bigg(\sum_Q q(Q) Q\bigg)\bigg(\sum_{\gamma}\frac{f(\gamma)}{Z_f}\phi^{-1}(\gamma) h(\gamma) \bigg)\\
&= \bbE_q[Q] \frac{1}{Z_f} \sum_{\gamma} p(\gamma|\calD) h(\gamma) \label{X10}\\
&=\frac{S}{P} \frac{1}{Z_f}\sum_{\gamma} p(\gamma|\calD) h(\gamma) \label{U1},
\end{align} where \eqref{X10} follows from $f(\gamma)=p(\gamma|\calD)\phi(\gamma)$. 

Similarly,  we have
\begin{align}
\frac{1}{T}\sum_{t=1}^T Q^{(t)} \phi^{-1}(\gamma^{(t)}) &\to  \bbE_{q f(\cdot)/Z_f}\big[\phi^{-1}(\gamma)
Q\big]\\
&=\sum_Q q(Q)Q \sum_{\gamma} \frac{f(\gamma)}{Z_f} \phi^{-1}(\gamma)\\
&=\bbE_q[Q] \sum_{\gamma} \frac{1}{Z_f} p(\gamma|\calD) \label{X11} \\
&=\frac{S}{P} \frac{1}{Z_f} \label{U2},
\end{align} where \eqref{X11} also follows from $f(\gamma)=p(\gamma|D)\phi(\gamma)$. 
\end{proof}
From \eqref{U1} and \eqref{U2}, we obtain
\begin{align}
\frac{\frac{1}{T}\sum_{t=1}^T \phi^{-1}(\gamma^{(t)}) Q^{(t)} h(\gamma^{(t)})}{\frac{1}{T}\sum_{t=1}^T Q^{(t)} \phi^{-1}(\gamma^{(t)})} \to \sum_{\gamma} p(\gamma|\calD) h(\gamma),
\end{align}
or equivalently
\begin{align}
\sum_{t=1}^T \rho^{(t)} h(\gamma^{(t)}) \to \sum_{\gamma} p(\gamma|\calD) h(\gamma) \label{eq105}
\end{align} as $T \to \infty$. 

Now, by setting $h(\gamma)=p(\gamma_i=1|\gamma_{-i},\calD)$, from \eqref{eq105}, we obtain
\begin{align}
 \sum_{t=1}^T \rho^{(t)}  p(\gamma_i^{(t)}=1|\gamma_{-i}^{(t)},\calD) \to \texttt{PIP}(i)
\end{align} for all $i \in [P]$. 

The following result bounds the variance of PIP estimator at finite $T$. 
\begin{lemma} \label{lem2} For any $\eps \in [0,1]$, let $\nu$ and $\pi$ be the initial and stationary distributions of the reversible Markov sequence $\big\{\big(\gamma^{(t)},Q^{(t)}\big)\big\}$. Define
	\begin{align}
	\hat{\phi}(\gamma):=\frac{\phi^{-1}(\gamma)}{\max_{\gamma}\phi^{-1}(\gamma)} \label{defhphi},
	\end{align} 
and
	\begin{align}
	\eps_0=\frac{P}{\texttt{PIP}(i)\bbE_{\pi}[\hat{\phi}(\gamma)]S} \sqrt{\frac{64 e \log T}{(1-\lambda_{\gamma,Q})T}} \label{epsdef}.
	\end{align}
Then, we have
	\begin{align}
	\bbE\bigg[\bigg|\sum_{t=1}^T\rho^{(t)} p(\gamma_i^{(t)}=1| \gamma_{-i}^{(t)},\calD)-\texttt{PIP}(i) \bigg|^2\bigg] \leq \frac{4\eps_0^2}{(1-\eps_0)^2} \texttt{PIP}^2(i)+ \frac{4P}{S} \frac{1}{\min_{\gamma} \pi(\gamma)T} \to 0 \label{smesxi},
	\end{align} as $T\to \infty$ for fixed $P,S$ and the dataset. Here, $\pi(\gamma)$ is the marginal distribution of $\pi(\gamma,Q)$.
\end{lemma}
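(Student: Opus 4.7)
The plan is to rewrite the estimator in \eqref{rao-blest} as a ratio $U_T/V_T$ with $U_T := \frac{1}{T}\sum_{t=1}^T \phi^{-1}(\gamma^{(t)})Q^{(t)} h(\gamma^{(t)})$, $V_T := \frac{1}{T}\sum_{t=1}^T \phi^{-1}(\gamma^{(t)})Q^{(t)}$, and $h(\gamma) := p(\gamma_i=1\mid\gamma_{-i},\calD) \in [0,1]$, and then to apply Lemma \ref{lem0}. Using Lemma \ref{aux:lem1} and the computations from the proof of Lemma \ref{rbwest}, one reads off $\bbE_\pi[U_T] = \frac{S\,\texttt{PIP}(i)}{PZ_f}$, $\bbE_\pi[V_T] = \frac{S}{PZ_f}$, and hence $\bbE_\pi[U_T]/\bbE_\pi[V_T] = \texttt{PIP}(i)$. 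Because $h \le 1$ implies $U_T/V_T \le 1$ almost surely, the constant $M=1$ suffices in Lemma \ref{lem0}.

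For the concentration input, let $M_0 := \max_\gamma \phi^{-1}(\gamma)$, so both $\phi^{-1}(\gamma) Q h(\gamma)$ and $\phi^{-1}(\gamma) Q$ are bounded by $M_0$, and observe via $f(\gamma) = p(\gamma\mid\calD)\phi(\gamma)$ that $\bbE_\pi[\hat\phi(\gamma)] = 1/(M_0 Z_f)$. Apply Lemma \ref{hoeffinglem} to the centered summands on the reversible joint chain of Lemma \ref{aux:lem1}, with spectral parameter $\lambda_{\gamma,Q}$ and uniform bound $M_0$, and take $u = \sqrt{64e\log T/(1-\lambda_{\gamma,Q})}$; a short algebraic simplification using $M_0 Z_f = 1/\bbE_\pi[\hat\phi]$ shows that, under $\pi$, each of the relative errors $|U_T - \bbE_\pi[U_T]|/\bbE_\pi[U_T]$ and $|V_T - \bbE_\pi[V_T]|/\bbE_\pi[V_T]$ is at most $\eps_0$ from \eqref{epsdef} with probability at least $1 - 2/T$. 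Since ALG \ref{ALG1} actually starts from $\nu = \mathrm{Bern}(h)^{\otimes P}\otimes \delta_1 \neq \pi$ (notably $Q^{(1)}=1$ while $q(1) = S/P$), I would transfer the tail via $\bbP_\nu[A] \le \|d\nu/d\pi\|_\infty\, \bbP_\pi[A]$; the product structure $\pi(\gamma,Q) = \pi(\gamma) q(Q)$ together with $\mathrm{Bern}(h)^{\otimes P}(\gamma) \le 1$ yields $\|d\nu/d\pi\|_\infty \le \frac{1}{q(1)\min_\gamma \pi(\gamma)} = \frac{P}{S\min_\gamma \pi(\gamma)}$, so a union bound over the $U_T$ and $V_T$ events produces a good set $D$ with $\bbP_\nu[D^c] \le \alpha := \frac{4P}{S\min_\gamma \pi(\gamma)T}$.

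Feeding $M=1$, $\eps = \eps_0$ and this $\alpha$ into Lemma \ref{lem0} then delivers exactly \eqref{smesxi}; the right-hand side tends to $0$ as $T\to\infty$ because $\eps_0 = O(\sqrt{\log T/T})$ and $\alpha = O(1/T)$ once the data (and hence $P$, $S$, $\bbE_\pi[\hat\phi]$, $\min_\gamma\pi(\gamma)$, $\lambda_{\gamma,Q}$) are fixed. The main obstacle is balancing the Hoeffding tail against the non-stationary start: the $P/S$ prefactor in the $O(1/T)$ term of \eqref{smesxi} and the pure $\log T$ exponent inside $\eps_0$ both originate from the specific choice $u = \sqrt{64e\log T/(1-\lambda_{\gamma,Q})}$, which is precisely what makes $\|d\nu/d\pi\|_\infty \cdot 2/T$ coincide with $\alpha/2$ per event, so that the $\|d\nu/d\pi\|_\infty$ contribution appears as a multiplicative prefactor in the bad-event probability rather than inflating the $\log T$ inside $\eps_0$.
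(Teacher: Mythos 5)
Your proposal is correct and follows essentially the same route as the paper: decompose the estimator as a ratio $U/V$, apply the Markov-chain Hoeffding inequality (Lemma \ref{hoeffinglem}) to numerator and denominator on the stationary chain with the tuning $u=\sqrt{64e\log T/(1-\lambda_{\gamma,Q})}$, transfer to the actual start via $\|d\nu/d\pi\|_{\infty}\le \frac{P}{S\min_{\gamma}\pi(\gamma)}$, and feed $\eps_0$, $M=1$ and $\alpha=\frac{4P}{S\min_{\gamma}\pi(\gamma)T}$ into Lemma \ref{lem0}. The only cosmetic difference is that you keep $\phi^{-1}$ and carry the bound $M_0=\max_{\gamma}\phi^{-1}(\gamma)$ through Hoeffding, while the paper first normalizes to $\hat{\phi}\in[0,1]$; the resulting constants are identical.
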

\begin{proof}
See Appendix \ref{lem2:proof}. 
\end{proof}

Next, we provide a 
lower bound for $1-\lambda_{\gamma,Q}$. First, we recall the following Dirichlet form on spectral gap.
\begin{definition} Let $f,g: \Omega \to \bbR$. The Dirichlet form associated with a reversible Markov chain $Q$ on $\Omega$ is defined by
\begin{align}
\calE(f,g)&=\langle (\bI-\bQ)f, g \rangle_{\pi}\\
&=\sum_{x\in \Omega} \pi(x)[f(x)-\bQ f(x)] g(x)\\
&=\sum_{x\in \Omega} \pi(x)\bigg[\sum_y Q(x,y)(f(x)-f(y))\bigg] g(x)\\
&=\sum_{x,y \in \Omega \times \Omega}\pi(x)Q(x,y)g(x)(f(x)-f(y)).
\end{align}
\end{definition}
\begin{lemma}\citep{Diaconis1993a} \label{diricletmod}(Variational characterisation) For a reversible Markov chain $Q$ with state space $\Omega$ and stationary distribution $\pi$, it holds that
	\begin{align}
	1-\lambda=\inf_{g:\Omega \to \bbR,\atop \bbE_{\pi}[g]=0, \bbE_{\pi}[g^2]=1} \calE(g,g),
	\end{align} where $\calE(g,g):=\langle (\bI-\bQ)g, g\rangle_{\pi}$.
\end{lemma}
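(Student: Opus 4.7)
The plan is to derive the identity from the self-adjointness of $\bQ$ on $L_2(\pi)$, which is equivalent to reversibility, and then apply the Rayleigh--Ritz characterization of the spectrum of a bounded self-adjoint operator to the restriction of $\bQ - E_{\pi}$ to the orthogonal complement of the constants. Since $\pi \bQ = \pi$, the constant function $\boldsymbol{1}$ is an eigenfunction of $\bQ$ with eigenvalue $1$, and $E_{\pi}$ is precisely the orthogonal projector onto $\mathrm{span}(\boldsymbol{1})$, so $\bQ - E_{\pi}$ vanishes on constants and acts as $\bQ$ on the invariant subspace $\scH_0 := \{g \in L_2(\pi): \bbE_{\pi}[g] = 0\}$.

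The key steps are as follows. First, for any $g$ with $\bbE_{\pi}[g]=0$ and $\bbE_{\pi}[g^2]=1$, I would expand
\begin{equation*}
\calE(g,g) = \langle (\bI - \bQ)g, g\rangle_{\pi} = \|g\|_{2,\pi}^2 - \langle \bQ g, g\rangle_{\pi} = 1 - \langle \bQ g, g\rangle_{\pi},
\end{equation*}
so that
\begin{equation*}
\inf_{g \in \scH_0, \|g\|_{2,\pi}=1}\calE(g,g) = 1 - \sup_{g \in \scH_0, \|g\|_{2,\pi}=1}\langle \bQ g, g\rangle_{\pi}.
\end{equation*}
Second, I would verify self-adjointness: the detailed balance relation $\pi(dx)Q(x,dy)=\pi(dy)Q(y,dx)$ yields $\langle \bQ f, g\rangle_{\pi} = \langle f, \bQ g\rangle_{\pi}$ for all $f,g \in L_2(\pi)$. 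Third, the Rayleigh--Ritz/spectral theorem for bounded self-adjoint operators gives
\begin{equation*}
\|\bQ - E_{\pi}\|_{L_2(\pi) \to L_2(\pi)} = \sup_{g \in \scH_0, \|g\|_{2,\pi}=1} |\langle (\bQ-E_\pi) g, g\rangle_{\pi}| = \sup_{g \in \scH_0, \|g\|_{2,\pi}=1} |\langle \bQ g, g\rangle_{\pi}|,
\end{equation*}
and identifying this operator norm with $\lambda$ from \eqref{defL2gap} yields $1 - \lambda = \inf_{g \in \scH_0, \|g\|_{2,\pi}=1}\calE(g,g)$.

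The main subtlety will be reconciling the absolute value in the operator-norm characterization with the unsigned supremum that appears from the Dirichlet form expansion. Strictly, $\sup |\langle \bQ g,g\rangle_\pi|$ equals $\max(\lambda_{\max},|\lambda_{\min}|)$ of $\bQ$ restricted to $\scH_0$, whereas $\sup \langle \bQ g,g\rangle_\pi$ equals $\lambda_{\max}$ only; the identity as stated therefore corresponds to the convention in which $\lambda$ is the largest (rather than the largest-in-modulus) nontrivial spectral value, which is the convention used throughout \citep{Diaconis1993a}. Since this paper only invokes Lemma~\ref{diricletmod} for lower-bounding $1-\lambda_{\gamma,Q}$ via Dirichlet form estimates, the bound is valid under either convention (the absolute-value version only makes $1-\lambda$ smaller), so once the self-adjoint Rayleigh-quotient step is carried out the result follows directly; apart from this convention check, the remaining arguments are routine Hilbert-space manipulations.
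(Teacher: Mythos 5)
The paper does not actually prove this lemma: it is imported verbatim with a citation to \citep{Diaconis1993a}, so there is no in-paper argument to compare against. Your proposal supplies the standard self-contained proof, and it is correct: decompose $L_2(\pi)$ as $\spn(\bone)\oplus\scH_0$ with $\scH_0=\{g:\bbE_\pi[g]=0\}$, observe that $E_\pi$ is the orthogonal projection onto the constants so that $\bQ-E_\pi$ preserves $\scH_0$ and acts there as $\bQ$, use detailed balance to get self-adjointness of $\bQ$ on $L_2(\pi)$, expand $\calE(g,g)=1-\langle \bQ g,g\rangle_\pi$ on the unit sphere of $\scH_0$, and apply Rayleigh--Ritz. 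Your identification of the one genuine subtlety --- that $\|\bQ-E_\pi\|_{L_2(\pi)\to L_2(\pi)}$ is the largest \emph{modulus} of a nontrivial spectral value, whereas $\sup_{g\in\scH_0,\|g\|_{2,\pi}=1}\langle \bQ g,g\rangle_\pi$ is only the largest such value, so the stated equality requires either the ``largest eigenvalue'' convention for $\lambda$ or positive semi-definiteness of the restricted operator --- is exactly the point a careful reader must address, and you address it.

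One correction to your closing remark: the assertion that the discrepancy is harmless downstream ``because the absolute-value version only makes $1-\lambda$ smaller'' has the logic reversed. In Lemma~\ref{sgaplem} the paper needs a \emph{lower} bound on $1-\lambda_{\gamma,Q}$, and that $\lambda_{\gamma,Q}$ is subsequently fed into the Markov-chain Hoeffding inequality (Lemma~\ref{hoeffinglem}), which requires the operator-norm quantity of \eqref{defL2gap}. Since in general $1-\|\bQ-E_\pi\|\le \inf_{g}\calE(g,g)$, a lower bound on the Dirichlet-form infimum transfers to $1-\lambda$ in the operator-norm sense only in the equality case, e.g.\ when the restricted kernel is positive semi-definite (as for suitably lazy chains). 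This does not affect the correctness of your proof of the lemma under the Diaconis--Stroock convention, but the ``either convention works'' parenthetical should be dropped or replaced by an explicit positivity check on the kernel $K$.
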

\begin{lemma} \label{sgaplem} The spectral gap $1-\lambda_{\gamma,Q}$ of the reversible Markov chain $\{\gamma^{(t)},Q^{(t)}\}$ satisfies
	\begin{align}
1-\lambda_{\gamma,Q} \geq 1-\frac{S}{P}\lambda_P \geq 1-\frac{S}{P}
	\end{align} where $1-\lambda_P$ is the spectral gap of the reversible Markov chain $\{\gamma^{(t)}\}$ of the wTGS algorithm (i.e. $S=P$). 
\end{lemma}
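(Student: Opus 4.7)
The plan is to combine the product-kernel structure of Lemma \ref{aux:lem1}, namely $K((\gamma,Q)\to(\gamma',Q')) = K^*(\gamma\to\gamma')\,q(Q')$ with $K^* = \tfrac{S}{P}K_{wTGS} + (1-\tfrac{S}{P})I$, together with the variational characterization of the spectral gap in Lemma \ref{diricletmod}. Setting $\alpha := S/P$ for brevity, observe that $K^*$ is the $\alpha$-lazy version of the wTGS kernel and that the $Q$-coordinate is an independent Bernoulli draw whose averaging operator on $L_2(q)$ has operator norm $1$ on constants and $0$ on mean-zero functions. This already suggests that the spectral gap of the joint chain should be controlled by that of $K^*$.

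For any test function $g \in L_2(\pi \otimes q)$ with $\bbE_{\pi\otimes q}[g]=0$ and $\|g\|=1$, I would introduce the $q$-average $\bar g(\gamma) := \bbE_{Q\sim q}[g(\gamma,Q)]$. A direct computation, exploiting the independence of $Q'$ from the past, yields $(\bK g)(\gamma,Q) = (\bK^*\bar g)(\gamma)$, which depends only on $\gamma$. Pairing with $g$ gives $\langle \bK g, g\rangle_{\pi\otimes q} = \langle \bK^*\bar g, \bar g\rangle_{\pi_\gamma}$, while the law of total variance gives $\|g\|^2_{\pi\otimes q} = \|\bar g\|^2_{\pi_\gamma} + \bbE_{\pi_\gamma}[\mathrm{Var}_q(g)]$. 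Combining these with linearity of the Dirichlet form under $K^* = \alpha K_{wTGS} + (1-\alpha)I$ — the identity contributes $\calE_I\equiv 0$ — one obtains the clean decomposition
\begin{align*}
\calE(g,g) = \bbE_{\pi_\gamma}[\mathrm{Var}_q(g)] + \alpha\,\calE_{wTGS}(\bar g, \bar g).
\end{align*}

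Because $\bbE_{\pi}[g]=0$ forces $\bbE_{\pi_\gamma}[\bar g]=0$, Lemma \ref{diricletmod} applied to the wTGS chain yields $\calE_{wTGS}(\bar g, \bar g) \geq (1-\lambda_P)\|\bar g\|^2$. Letting $t := \|\bar g\|^2 \in [0,1]$, so that $\bbE_{\pi_\gamma}[\mathrm{Var}_q(g)] = 1-t$, this becomes
\begin{align*}
\calE(g,g) \geq (1-t) + \alpha(1-\lambda_P)\,t,
\end{align*}
a linear function of $t$ whose non-positive slope $\alpha(1-\lambda_P)-1 \leq 0$ forces the minimum on $[0,1]$ at $t=1$, giving $\calE(g,g) \geq \alpha(1-\lambda_P)$. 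Taking the infimum over admissible $g$ and invoking Lemma \ref{diricletmod} for the joint chain yields the first inequality; the second, $1-\tfrac{S}{P}\lambda_P \geq 1-\tfrac{S}{P}$, is immediate from $\lambda_P \leq 1$, which follows from the definition of $\lambda_P$ as an $L_2$ operator norm on the mean-zero subspace.

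The main technical subtlety will be carefully justifying the Dirichlet-form decomposition for arbitrary non-tensor test functions $g(\gamma,Q)$, in particular confirming the product-kernel factorization of Lemma \ref{aux:lem1} governs the action of $\bK$ correctly and that the identity term drops out cleanly via $\calE_I\equiv 0$. A secondary, purely algebraic, subtlety is reconciling the derived lower bound $\alpha(1-\lambda_P)$ with the tighter form $1-\alpha\lambda_P$ stated in the lemma, which may be absorbed into the slack of the $t\mapsto(1-t)+\alpha(1-\lambda_P)t$ minimization.
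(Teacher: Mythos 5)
Your decomposition is correct and is essentially a more careful version of the paper's own argument: both routes pass through the variational characterization of Lemma \ref{diricletmod} and the $q$-average $\bar g(\gamma)=\sum_Q g(\gamma,Q)q(Q)$, which is exactly the paper's auxiliary function $h$. The genuine problem is the endpoint: what you actually prove is $1-\lambda_{\gamma,Q}\ge \frac{S}{P}(1-\lambda_P)$, which is \emph{not} the stated bound $1-\frac{S}{P}\lambda_P$; the two differ by exactly $1-\frac{S}{P}$. Your closing hope that this discrepancy ``may be absorbed into the slack'' of the minimization over $t$ cannot work. The minimum of $(1-t)+\frac{S}{P}(1-\lambda_P)t$ at $t=1$ is attained by test functions depending on $\gamma$ alone, and if you take $g(\gamma,Q)=h_2(\gamma)$ with $h_2$ the second eigenfunction of the wTGS kernel, your own identity gives $\calE(g,g)=\frac{S}{P}\calE_{\mathrm{wTGS}}(h_2,h_2)=\frac{S}{P}(1-\lambda_P)$ exactly, so Lemma \ref{diricletmod} forces $1-\lambda_{\gamma,Q}\le\frac{S}{P}(1-\lambda_P)<1-\frac{S}{P}\lambda_P$ whenever $S<P$ and $\lambda_P<1$. (Concretely, the joint operator is $\big(\frac{S}{P}\bK_{\mathrm{wTGS}}+(1-\frac{S}{P})\bI\big)\otimes E_q$ with $E_q$ a rank-one projection, so its nontrivial eigenvalues are $\frac{S}{P}\mu+1-\frac{S}{P}$ for $\mu$ an eigenvalue of $\bK_{\mathrm{wTGS}}$, together with $0$; the gap is exactly $\frac{S}{P}(1-\lambda_P)$.) So your bound is tight, and the stated constant is not reachable by this — or any — argument.

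For comparison, the paper obtains the stronger constant by pulling a factor $S/P$ out of the entire quadratic form $\sum_{\gamma,\gamma'}K^*(\gamma\to\gamma')\pi(\gamma)h(\gamma)h(\gamma')$ in \eqref{laxe}; that factorization is valid only for the flip part of $K^*$ and silently discards the lazy part $(1-\frac{S}{P})\delta(\gamma'-\gamma)$, which contributes the non-negative term $(1-\frac{S}{P})\|\bar g\|_{2,\pi}^2$ to $\langle \bK g,g\rangle$ and therefore cannot be dropped when lower-bounding $1-\sup_g\langle \bK g,g\rangle$. Your bookkeeping via the law of total variance keeps this term and yields the correct constant $\frac{S}{P}(1-\lambda_P)$ — still of order $S/P$, but it changes what should be substituted into Lemma \ref{lem2} and Theorem \ref{main:thm}. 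Two smaller points: the slope $\frac{S}{P}(1-\lambda_P)-1$ is non-positive only when $\frac{S}{P}(1-\lambda_P)\le 1$ (it can fail if $\lambda_P<1-\frac{P}{S}$, in which case your bound degenerates to $\min\{1,\frac{S}{P}(1-\lambda_P)\}$); and the passage from $\calE_{\mathrm{wTGS}}(\bar g,\bar g)\ge(1-\lambda_P)\|\bar g\|_{2,\pi}^2$ uses Lemma \ref{diricletmod} for non-normalized $\bar g$, which is fine by homogeneity but worth stating.
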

\begin{proof}
From Lemma \ref{diricletmod} and the fact that $\{\gamma^{(t)},Q^{(t)}\}$ forms a reversible Markov chain with transition kernel $K((\gamma,Q)\to (\gamma',Q'))=K^*(\gamma \to \gamma')q(Q')$, we have
\begin{align}
&1-\lambda_{\gamma,Q}\nn\\
&=\inf_{g(\gamma,Q): \bbE_{\pi}[g]=0, \bbE_{\pi}[g^2]= 1}\langle g, g \rangle_{\pi}-\langle \bK g, g \rangle\\
&=1 -\sup_{g(\gamma,Q): \bbE_{\pi}[g]=0, \bbE_{\pi}[g^2]= 1}\langle \bK g, g \rangle\\
&=1- \sup_{g(\gamma,Q): \bbE_{\pi}[g]=0, \bbE_{\pi}[g^2]= 1}\sum_{\gamma,Q} \bK g(\gamma,Q) g(\gamma,Q)\pi(\gamma,Q) \\
&=1-\sup_{g(\gamma,Q): \bbE_{\pi}[g]=0, \bbE_{\pi}[g^2]= 1}\sum_{\gamma,Q} \sum_{\gamma',Q'} K((\gamma,Q)\to (\gamma',Q')) g(\gamma',Q')g(\gamma,Q)\pi(\gamma,Q)\\
&=1-\sup_{g(\gamma,Q): \bbE_{\pi}[g]=0, \bbE_{\pi}[g^2]= 1}\frac{S}{P}\sum_{\gamma,Q}\sum_{\gamma',Q'}K^*(\gamma\to \gamma') q(Q') g(\gamma',Q')g(\gamma,Q)\pi(\gamma,Q)\\
&=1-\frac{S}{P}\sup_{g(\gamma,Q): \bbE_{\pi}[g]=0, \bbE_{\pi}[g^2]= 1}\sum_{\gamma,Q}\sum_{\gamma',Q'}K^*(\gamma\to \gamma') \frac{f(\gamma)}{Z_f}q(Q) g(\gamma',Q')g(\gamma,Q)q(Q')\\
&=1-\frac{S}{P}\sup_{g(\gamma,Q): \bbE_{\pi}[g]=0, \bbE_{\pi}[g^2]= 1}\sum_{\gamma,\gamma'}K^*(\gamma\to \gamma') \frac{f(\gamma)}{Z_f}\sum_{Q,Q'} g(\gamma',Q')g(\gamma,Q)q(Q)q(Q')\\
&=1-\frac{S}{P}\sup_{g(\gamma,Q): \bbE_{\pi}[g]=0, \bbE_{\pi}[g^2]= 1}\sum_{\gamma,\gamma'}K^*(\gamma\to \gamma') \pi(\gamma) \bigg(\sum_Q g(\gamma,Q)q(Q)\bigg)\bigg(\sum_{Q'}\pi(\gamma',Q')q(Q')\bigg)\\
&=1-\frac{S}{P}\sup_{g(\gamma,Q): \bbE_{\pi}[g]=0, \bbE_{\pi}[g^2]= 1}\sum_{\gamma,\gamma'}K^*(\gamma\to \gamma') \pi(\gamma) h(\gamma) h(\gamma')  \label{laxe}
\end{align} where
\begin{align}
\pi(\gamma)&=\frac{f(\gamma)}{Z_f},\\
Z_f&=\sum_{\gamma} f(\gamma),\\
h(\gamma)&:=\sum_Q g(\gamma,Q)q(Q).  
\end{align}
Observe that
\begin{align}
\bbE_{\pi}[h(\gamma)]&= \sum_{\gamma} h(\gamma)\pi(\gamma)\\
&= \sum_{\gamma} \sum_Q g(\gamma,Q)q(Q) \pi(\gamma)\\
&=\sum_{\gamma,Q} g(\gamma,Q) \pi(\gamma,Q)\\
&=\bbE_{\pi}[g(\gamma,Q)]\\
&=0 \label{mu1}. 
\end{align}
On the other hand, we also have
\begin{align}
\bbE_{\pi}\big[h^2(\gamma)\big]&=\sum_{\gamma}\bigg(\sum_Q g(\gamma,Q)q(Q)\bigg)^2 \pi(\gamma)\\
&\leq \sum_{\gamma}\bigg(\sum_Q g(\gamma,Q)^2 q(Q)\bigg) \pi(\gamma) \label{y1}\\
&= \sum_{\gamma,Q} g(\gamma,Q)^2 \pi(\gamma,Q)\\
&= \bbE_{\pi} \big[g(\gamma,Q)^2\big]\\
&=1 \label{mu3}, 
\end{align} where \eqref{y1} follows from the convexity of the function $x^2$ on $[0,\infty)$.

From \eqref{mu1}, \eqref{mu3}, and \eqref{laxe}, we obtain
\begin{align}
1-\lambda_{\gamma,Q}&\geq 1-\frac{S}{P}\sup_{h(\gamma): \bbE_{\pi}[h]=0, \bbE_{\pi}[h^2]\leq 1} \sum_{\gamma,\gamma'}K^*(\gamma\to \gamma') \pi(\gamma) h(\gamma) h(\gamma') \label{me1}.
\end{align}
Now, note that $\bbE_{\pi}[h]=0$ is equivalent to $h \perp_{\pi} \bone$. Let $|\Omega|=2^{P+1}:=n$ and $h_1,h_2,\cdots,h_n$ are eigenfunctions of $\bK^*$ corresponding to the decreasing ordered eigenvalues $\lambda_1 \geq \lambda_2\geq  \cdots \geq \lambda_n$ and are orthogonal since $\bK^*$ is self-adjoint. Set $h_1=\bone$. Since $\|h\|_{2,\pi}=1$ and $h \perp_{\pi} \bone$, we have $h=\sum_{j=2}^n a_j h_j$ because it is perpendicular to $h_1$ so it can be only represented by these eigenvectors. By taking $l_2$-norm on both sizes we have $\sum_{j=2}^n a_j^2\leq 1$ since  the form like $\langle h_i, h_j \rangle_{\pi}=0$ and $\langle h_i, h_i \rangle=\|h_i\|_{2,\pi}^2 = 1$. Thus, 
\begin{align}
\sup_{h: \bbE_{\pi}[h]=0, \bbE_{\pi}[h^2]\leq 1} \sum_{\gamma,\gamma'}K^*(\gamma\to \gamma') \pi(\gamma) h(\gamma) h(\gamma')&\leq \max_{a_2,a_3,\cdots, a_n:\sum_{j=2}^n a_j^2 \leq 1} \sum_{j=1}^n a_j^2 \lambda_j \\
&\leq \lambda_2 \sum_{j=2}^n a_j^2\\
&=\lambda_2 \label{mubai},
\end{align} where $\sum_{j=2}^n a_j^2\leq 1$ and $\lambda_j \in \texttt{spec}(P)$ such that $\lambda_2 \geq \lambda_3 \cdots \geq \lambda_n$. 
Hence, from \eqref{mubai}, we obtain
\begin{align}
1-\lambda_{\gamma,Q}&\geq 1- \frac{S}{P}\lambda_2\\
&=1-\frac{S}{P}\lambda_P \label{umat1}\\
&=\frac{S}{P}\big(1-\lambda_P\big)+1-\frac{S}{P}\\
&\geq 1-\frac{S}{P}.  
\end{align}
\end{proof}
By combining Lemma \ref{rbwest}, Lemma \ref{lem2} and Lemma \ref{sgaplem}, we come up with the following theorem.
\begin{theorem} \label{main:thm} For the variable-complexity subset wTGS-based estimator in \eqref{rao-blest} and given dataset $(X,Y)$, it holds that
\begin{align}
E_{i,T}:=\sum_{t=1}^T\rho^{(t)} p(\gamma_i^{(t)}=1|\gamma_{-i}^{(t)},\calD) &\to \texttt{PIP}(i), \quad \texttt{as} \quad T \to \infty \label{abresul}
\end{align} 
and
\begin{align}
\bbE\bigg[\bigg|\sum_{t=1}^T\rho^{(t)} p(\gamma_i^{(t)}|\gamma_{-i}^{(t)},\calD)-\texttt{PIP}(i) \bigg|^2\bigg] = O\bigg(\frac{\log T}{T} \bigg(\frac{P}{S}\bigg)^2\bigg(\frac{\max_{\gamma}\phi(\gamma)}{\min_{\gamma}\phi(\gamma)}\bigg)^2 \bigg) \label{acresul},
\end{align}
where
\begin{align}
\phi(\gamma)=\frac{1}{2}\sum_{j\in [P]} \frac{p(\gamma_j=1|\gamma_{-j},\calD)}{p(\gamma_j|\gamma_{-j},\calD)}.
\end{align}
\end{theorem}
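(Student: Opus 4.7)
The plan is to simply assemble the three lemmas that precede the theorem, since the statement is essentially a packaging result. The first conclusion \eqref{abresul} is already the content of Lemma \ref{rbwest}, so nothing new is needed there. The second conclusion \eqref{acresul} follows by plugging the spectral-gap estimate of Lemma \ref{sgaplem} into the variance bound of Lemma \ref{lem2}, then rewriting the quantity $\bbE_{\pi}[\hat{\phi}(\gamma)]^{-1}$ explicitly in terms of $\max_{\gamma}\phi(\gamma)/\min_{\gamma}\phi(\gamma)$.

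Concretely, I would proceed as follows. First, apply Lemma \ref{lem2} to obtain
\begin{align}
\bbE\bigl[|E_{i,T}-\texttt{PIP}(i)|^2\bigr] \leq \frac{4\eps_0^2}{(1-\eps_0)^2}\texttt{PIP}^2(i) + \frac{4P}{S\min_{\gamma}\pi(\gamma)\,T},
\end{align}
with $\eps_0$ defined in \eqref{epsdef}. Next, use Lemma \ref{sgaplem} to replace $1-\lambda_{\gamma,Q}$ in the definition of $\eps_0$ by its lower bound $1-S/P$, so that $(1-\lambda_{\gamma,Q})^{-1}\leq P/(P-S)$, a constant for fixed $P,S$. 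Then observe from the definition \eqref{defhphi} that $\hat{\phi}(\gamma)=\min_{\gamma'}\phi(\gamma')/\phi(\gamma)$, which gives the pointwise lower bound $\hat{\phi}(\gamma)\geq \min_{\gamma'}\phi(\gamma')/\max_{\gamma'}\phi(\gamma')$ and hence
\begin{align}
\frac{1}{\bbE_{\pi}[\hat{\phi}(\gamma)]} \leq \frac{\max_{\gamma}\phi(\gamma)}{\min_{\gamma}\phi(\gamma)}.
\end{align}

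Substituting these two estimates into $\eps_0$ yields
\begin{align}
\eps_0^2 \leq \frac{64e\,\log T}{T}\cdot\frac{P}{P-S}\cdot\frac{P^2}{S^2\,\texttt{PIP}^2(i)}\cdot\biggl(\frac{\max_{\gamma}\phi(\gamma)}{\min_{\gamma}\phi(\gamma)}\biggr)^{2}.
\end{align}
For fixed $P,S,\calD$ we have $\eps_0 \to 0$ as $T\to\infty$, so eventually $\eps_0\leq 1/2$ and $(1-\eps_0)^{-2}\leq 4$. Multiplying by $\texttt{PIP}^2(i)$ cancels the $\texttt{PIP}^{-2}(i)$ factor, giving a bound of order $\frac{\log T}{T}\bigl(\frac{P}{S}\bigr)^{2}\bigl(\frac{\max\phi}{\min\phi}\bigr)^{2}$. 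The residual term $4P/(S\min_{\gamma}\pi(\gamma)T)$ is only $O(1/T)$ and is absorbed by the dominant $O(\log T/T)$ rate, yielding \eqref{acresul}.

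I do not expect a significant obstacle: the analytical work is done in Lemmas \ref{lem2} and \ref{sgaplem}. The one small thing to be careful about is the threshold $T$ beyond which $\eps_0<1$ (so that the $(1-\eps_0)^{-2}$ factor is bounded); since for fixed $P,S$ and a fixed dataset all the other factors in $\eps_0$ are constants, this threshold depends only on those constants and the argument is valid in the asymptotic $O(\cdot)$ sense required by the theorem.
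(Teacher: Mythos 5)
Your proposal is correct and follows essentially the same route as the paper: \eqref{abresul} is quoted from Lemma \ref{rbwest}, and \eqref{acresul} is obtained by inserting the spectral-gap bound of Lemma \ref{sgaplem} and the pointwise estimate $\bbE_{\pi}[\hat{\phi}(\gamma)]\geq \min_{\gamma}\phi(\gamma)/\max_{\gamma}\phi(\gamma)$ into the $\eps_0$ of Lemma \ref{lem2}. Your bookkeeping of the $(1-\eps_0)^{-2}$ factor and the $O(1/T)$ residual term is slightly more explicit than the paper's, but the argument is the same.
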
 
\begin{proof}
First, \eqref{abresul} is shown in Lemma \ref{rbwest}. Now, we show \eqref{acresul} by using Lemma \ref{lem2} and Lemma \ref{sgaplem}.

Observe that
\begin{align}
\bbE_{\pi}[\hat{\phi}(\gamma)]&=\bbE_{\pi}\bigg[\frac{\phi^{-1}(\gamma)}{\max_{\gamma}\phi^{-1}(\gamma)}\bigg]\nn\\
&\geq \frac{\min_{\gamma}\phi^{-1}(\gamma)}{\max_{\gamma}\phi^{-1}(\gamma)} \\
&= \frac{\min_{\gamma}\phi(\gamma)}{\max_{\gamma}\phi(\gamma)}
\label{T1}. 
\end{align}
In addition, we have
\begin{align}
\phi(\gamma)&= \sum_{j \in [P]} \frac{\frac{1}{2}\eta(\gamma_{-j})}{p(\gamma_j|\gamma_{-j},\calD)}\\
&= \frac{1}{2}\sum_{j\in [P]} \frac{p(\gamma_j=1|\gamma_{-j},\calD)}{p(\gamma_j|\gamma_{-j},\calD)}.
\end{align}
Now, note that
\begin{align}
\frac{p(\gamma_j=1|\gamma_{-j},\calD)}{p(\gamma_j|\gamma_{-j},\calD)}=\begin{cases} 1,&\qquad  \gamma_j=1 \\\frac{p(\gamma_j=1|\gamma_{-j},\calD)}{p(\gamma_j=0|\gamma_{-j},\calD)},&\qquad  \gamma_j=0.
\end{cases}
\end{align}
In Appendix \ref{posterapp}, we can estimate $\frac{p(\gamma_j=1|\gamma_{-j},\calD)}{p(\gamma_j=0|\gamma_{-j},\calD)}$ based on the dataset. More specifically, let $\tgamma_1$ is given by $\gamma_{-i}$ with $\gamma_i=1$, $\tgamma_0$ is given by $\gamma_{-i}$ with $\gamma_i=0$, then we can show that
\begin{align}
\frac{p(\gamma_j=1|\gamma_{-j},\calD)}{p(\gamma_j=0|\gamma_{-j},\calD)}=\bigg(\frac{h}{1-h}\bigg) \sqrt{\tau \frac{\det(X_{\tgamma_0}^T X_{\tgamma_0} +\tau I)}{\det(X_{\tgamma_1}^T X_{\tgamma_1} +\tau I)}}\bigg(\frac{\|Y\|^2-\|\tilY_{\tgamma_0}\|^2+\nu_0 \lambda_0}{\|Y\|^2-\|\tilY_{\tgamma_1}\|^2+\nu_0 \lambda_0}\bigg)^{\frac{N+\nu_0}{2}} \label{finxo}.
\end{align}
Here, 
\begin{align}
\|\tilY_{\gamma}\|^2&= \tilY_{\gamma}^T \tilY_{\gamma}\\
&= Y^T X_{\gamma} (X_{\gamma}^T X_{\gamma}+\tau I)^{-1} X_{\gamma}^T Y. 
\end{align}
\end{proof}
\section{Experiments} \label{sub:experiment} In this section, we show by simulation that the PIP-estimator is convergent as $T \to \infty$. In addition, we compare the variance of associated Rao-Blackwellized estimators for VC-wTGS and subset wTGS on simulated and real datasets. To compute $p(\gamma_i|\gamma_{-i},Y)$, we use the same trick as \cite[Appendix B.1]{Zanella2019ScalableIT} for the new setting. See our derivations of this posterior distribution in Appendix \ref{posterapp}.
\begin{figure}[h!]
     \centering
     \begin{subfigure}[b]{0.49\textwidth}
         \centering
         \includegraphics[width=\textwidth]{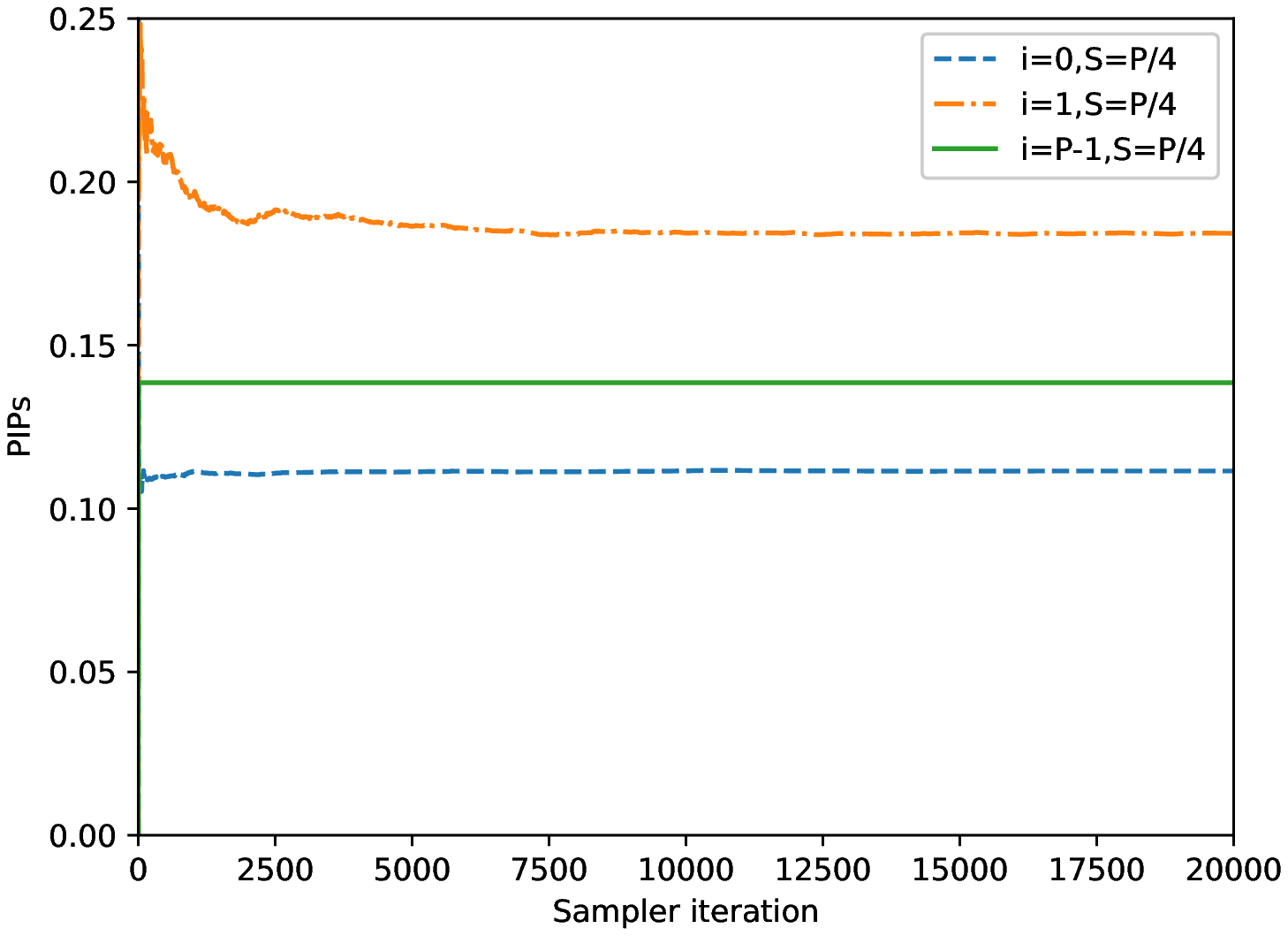}
         \label{fig:1a}
     \end{subfigure}
     \begin{subfigure}[b]{0.49\textwidth}
         \centering
         \includegraphics[width=\textwidth]{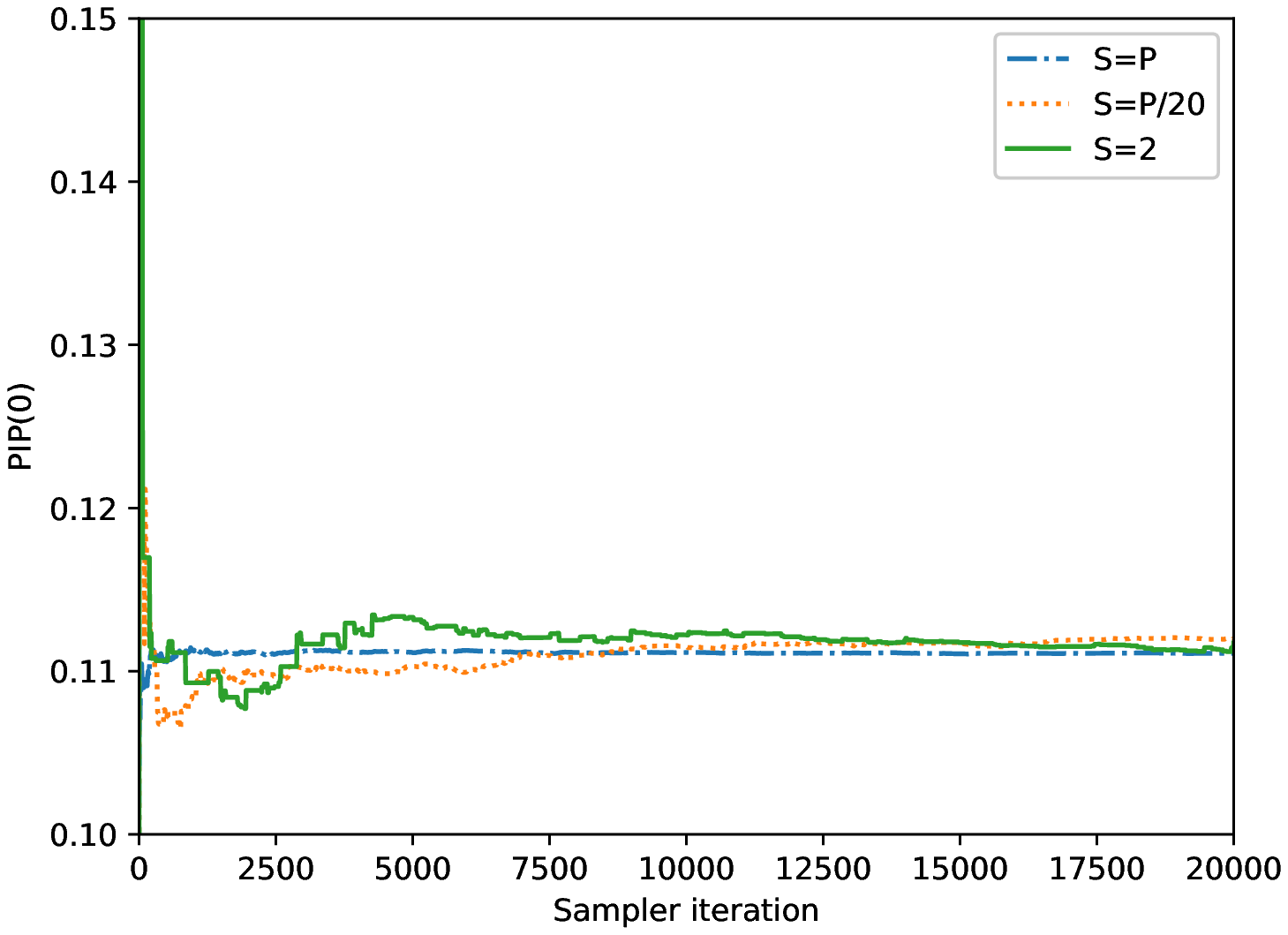}
         \label{fig:2a}
     \end{subfigure}
     \vfill
     \begin{subfigure}[b]{0.49\textwidth}
         \centering
         \includegraphics[width=\textwidth]{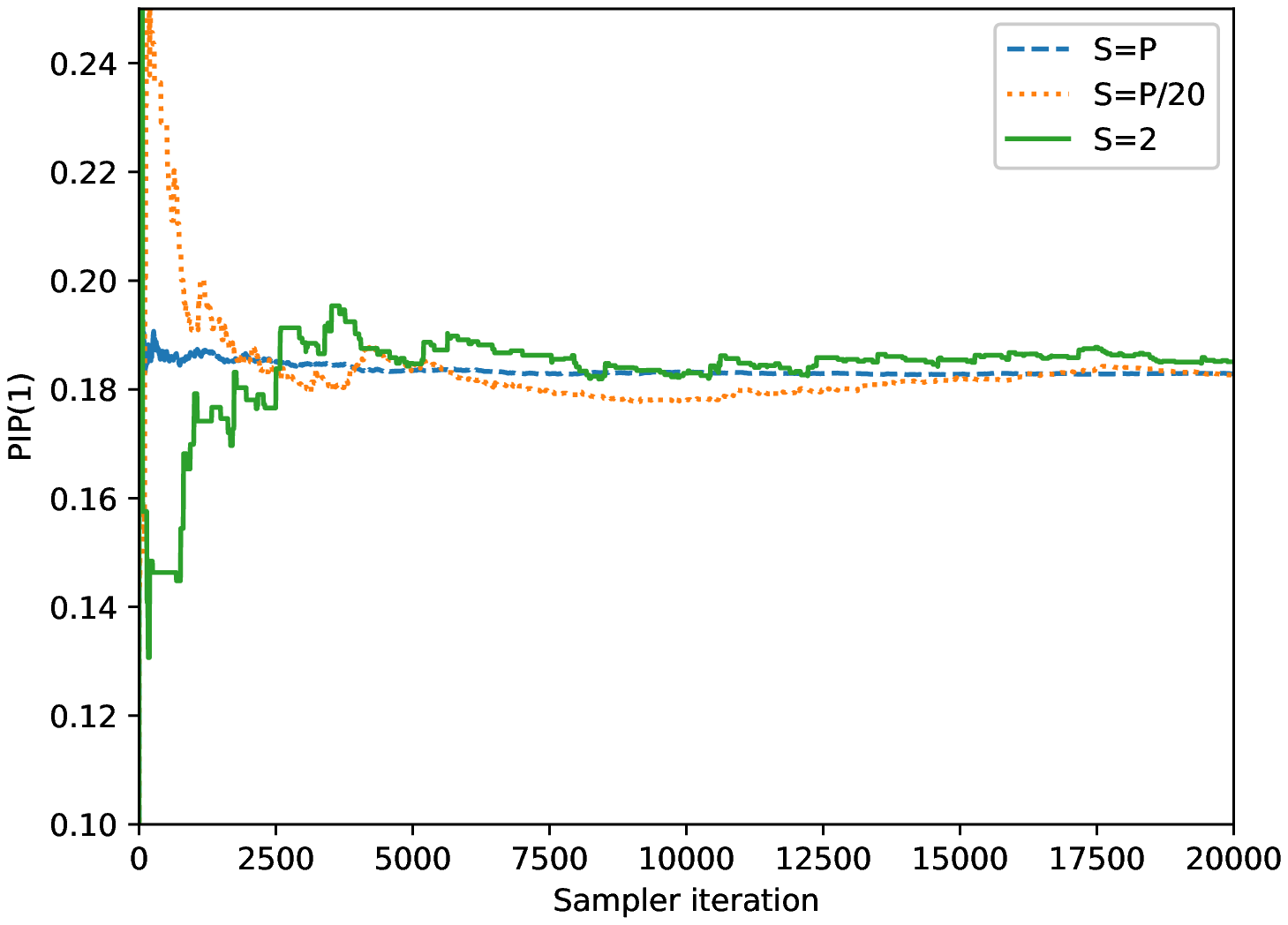}
         \label{fig:3a}
     \end{subfigure}
     \begin{subfigure}[b]{0.49\textwidth}
         \centering
         \includegraphics[width=\textwidth]{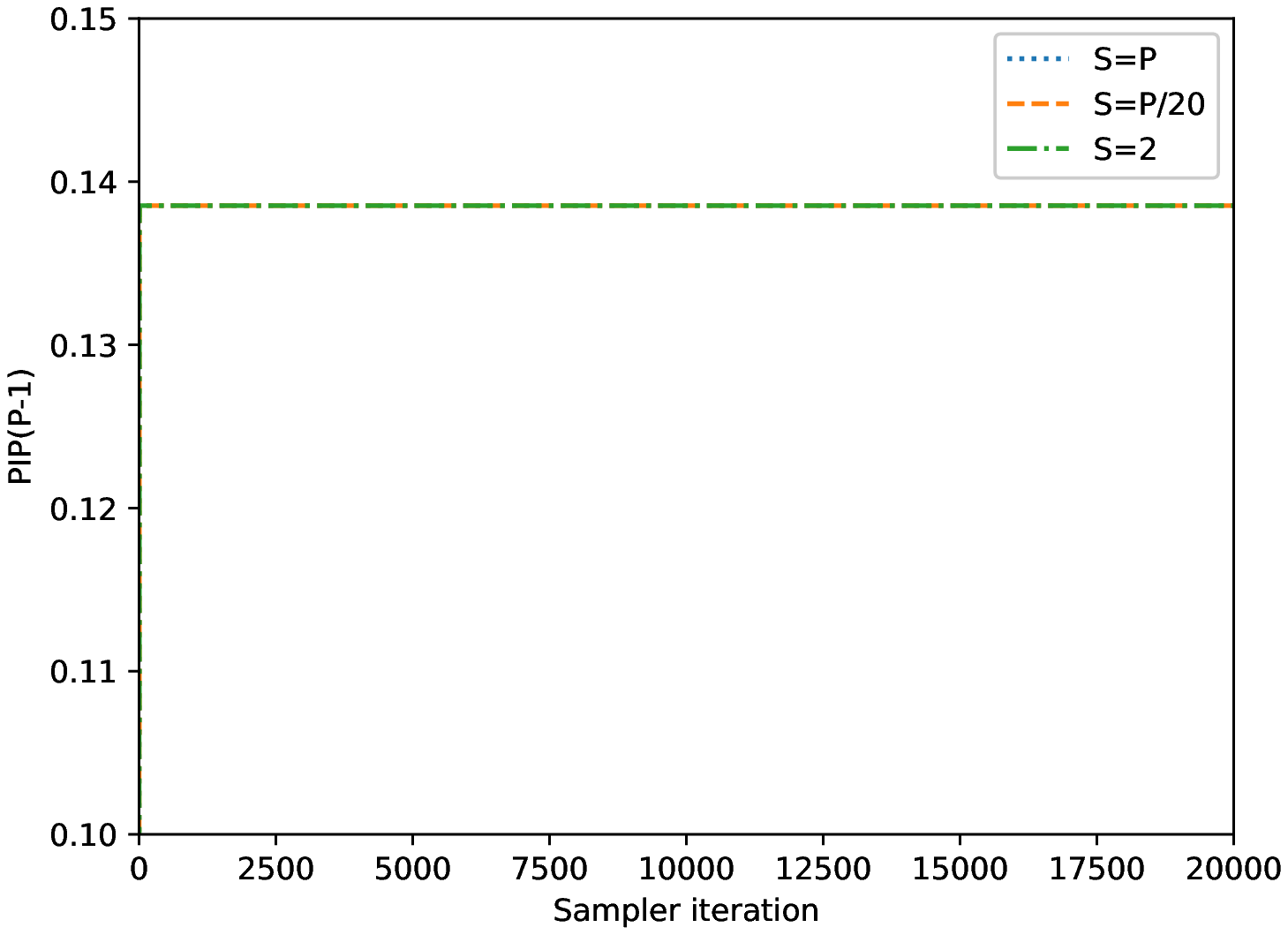}
         \label{fig:4a}
     \end{subfigure}

        \caption{VC-wTGS Rao-Blackwellized Estimators (ALG \ref{ALG1})}
        \label{fig:PIPs}
\end{figure}


\begin{figure}[h!]
     \centering
     \begin{subfigure}[b]{0.49\textwidth}
         \centering
         \includegraphics[width=\textwidth]{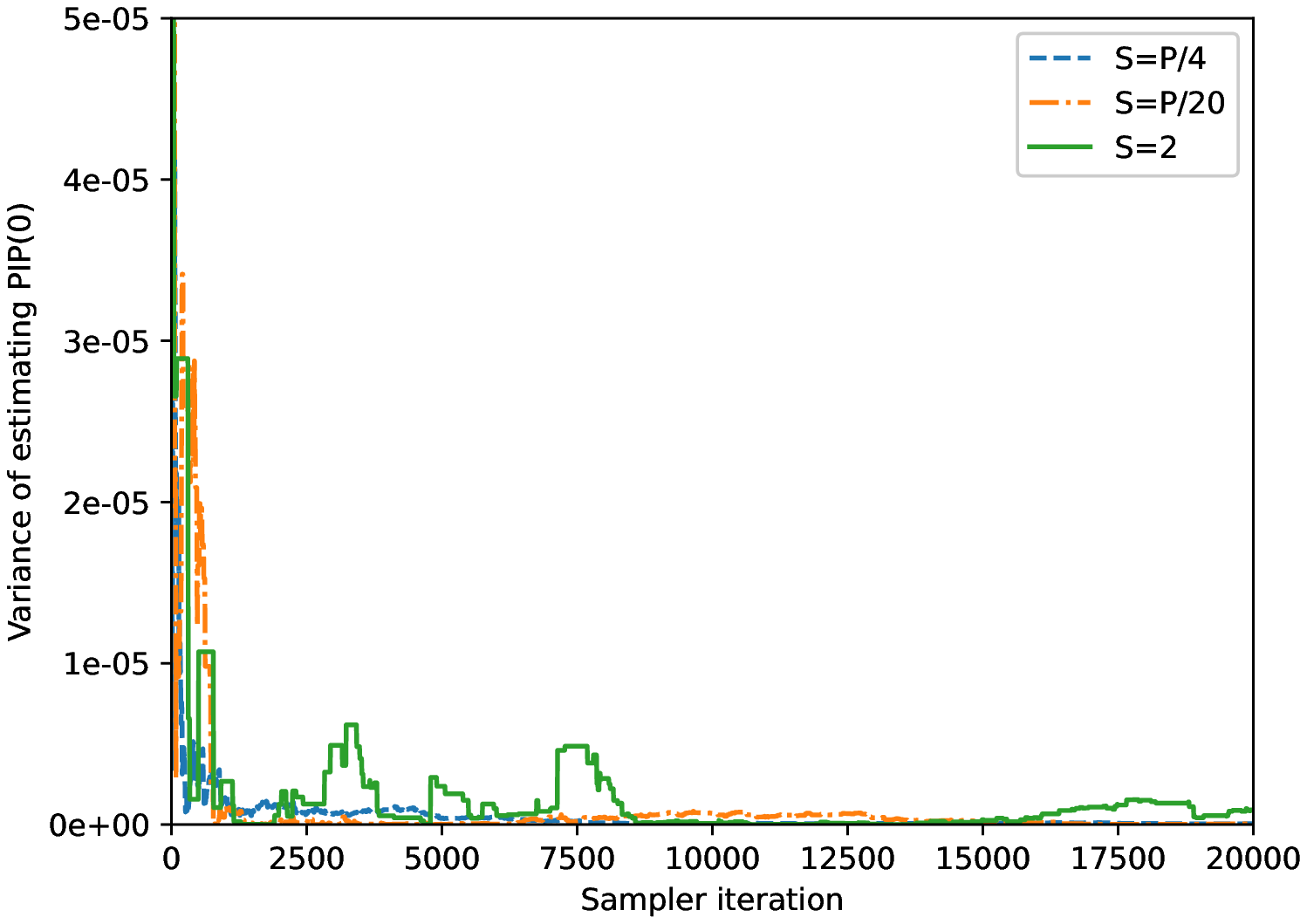}
         \label{fig:5a}
     \end{subfigure}
     \begin{subfigure}[b]{0.49\textwidth}
         \centering
         \includegraphics[width=\textwidth]{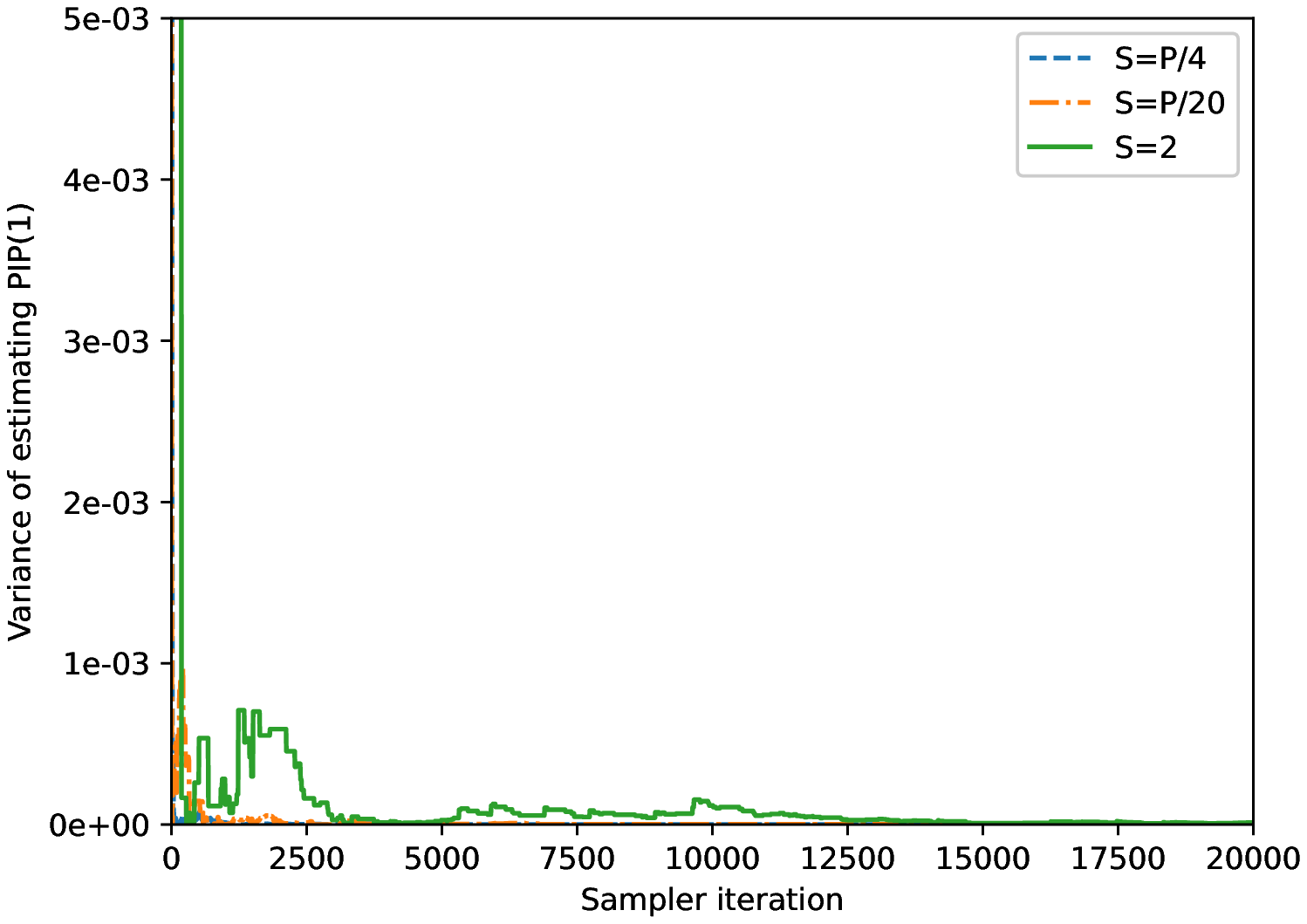}
         \label{fig:5b}
     \end{subfigure}   
        \caption{The variance of VC-wTGS Rao-Blackwellized Estimators}
        \label{fig:variances}
\end{figure}
\subsection{Simulated Datasets}
First, we perform a simulated experiment. Let $X \in \bbR^{N \times P}$ be a realization of a multivariate (random) Gaussian matrix. We consider the case $N=100$ and $P=200$.  We run $T=20000$ iterations. 

Fig.~\ref{fig:PIPs} shows that the Rao-Blackwellized estimator in \eqref{rao-blest} converges to the value of PIP at $T \to \infty$ for different values of $S$. Fig.~\ref{fig:variances} shows the variance for estimating PIP(0) and PIP(1)  at different values of $S$. Since the variance of wTGS is very small $\approx 0$ at $T$ big enough, the variance of variable-complexity wTGS is also small at $T$ big enough. 
\subsection{Real Datasets}
In this simulation, we run ALG 2 on MNIST dataset. Fig.~\ref{fig:PIPsII} and Fig.~\ref{fig:variancesMNIST} plot PIP(0) and PIP(1) and  the estimated variances for the Rao-Blackwellized estimator in \eqref{rao-blest} at different values of $S$, respectively. These plots show a trade-off between the computational complexity and the estimated variance for estimating PIP(0) and PIP(1). The expected number of PIP computations is only $ST$ in ALG \ref{ALG1} but $TP$ in wTGS if we run $T$ MCMC iterations. However, we suffer an increasing in variance. By Theorem \ref{main:thm}, the variance is $O\big(\big(\frac{P}{S}\big)^2 \frac{\log T}{T}\big)$ for a given dataset, i.e., increasing at most $(P/S)^2$ times. For many applications, we don't need to estimate PIPs exactly, hence VC-wTGS can be used to reduce computational complexity especially when $P$ is very large (million covariates). Fig. \ref{fig:PIPvariancesMNIST} shows that VC-wTGS outperforms subset wTGS \citep{Jankowiak2022BayesianVS} at high values of $P/S$. 

\begin{figure}[h!]
	\centering
	\begin{subfigure}[b]{0.49\textwidth}
		\centering
		\includegraphics[width=\textwidth]{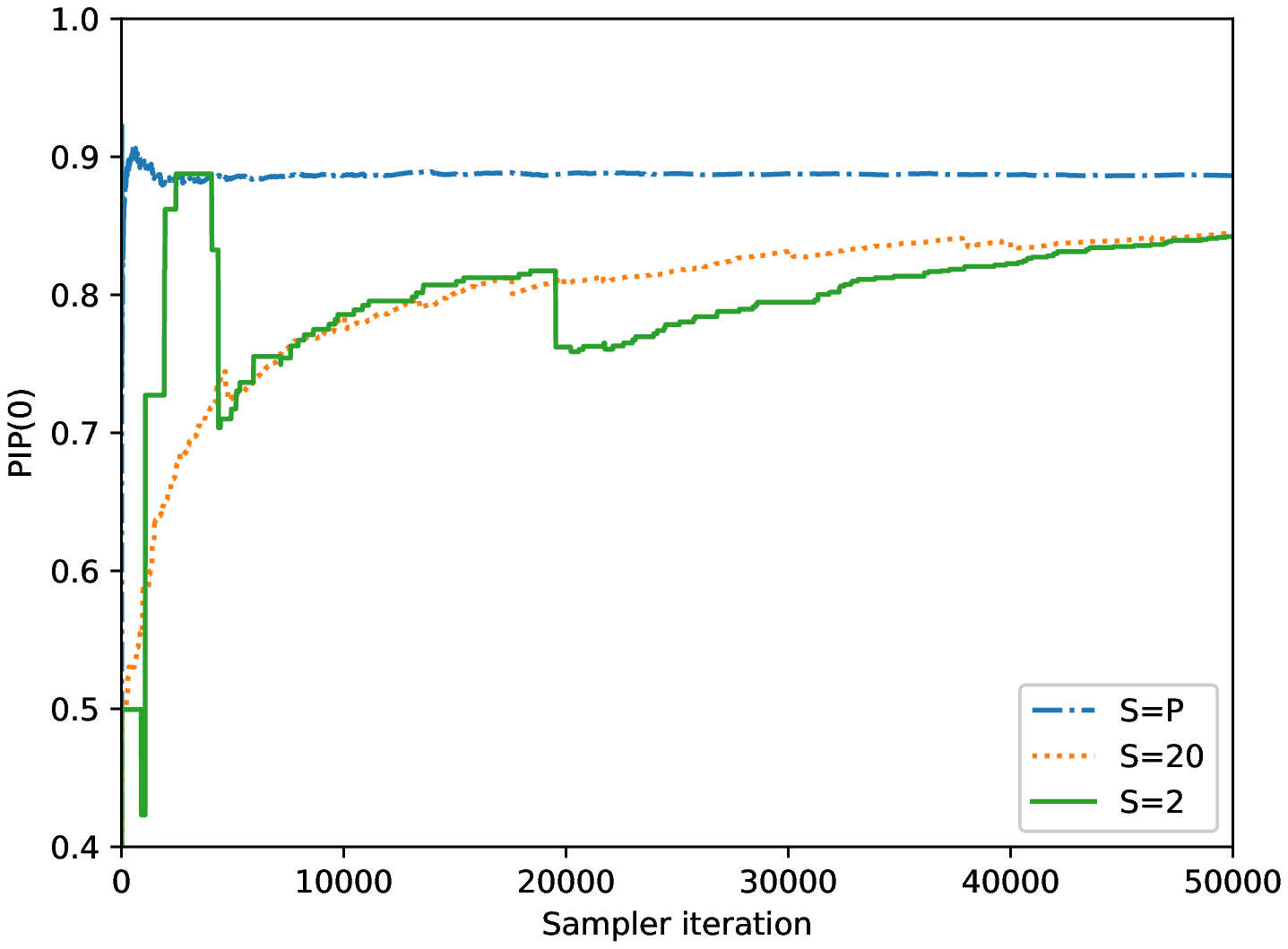}
		\label{fig:2b}
	\end{subfigure}
	\begin{subfigure}[b]{0.49\textwidth}
		\centering
		\includegraphics[width=\textwidth]{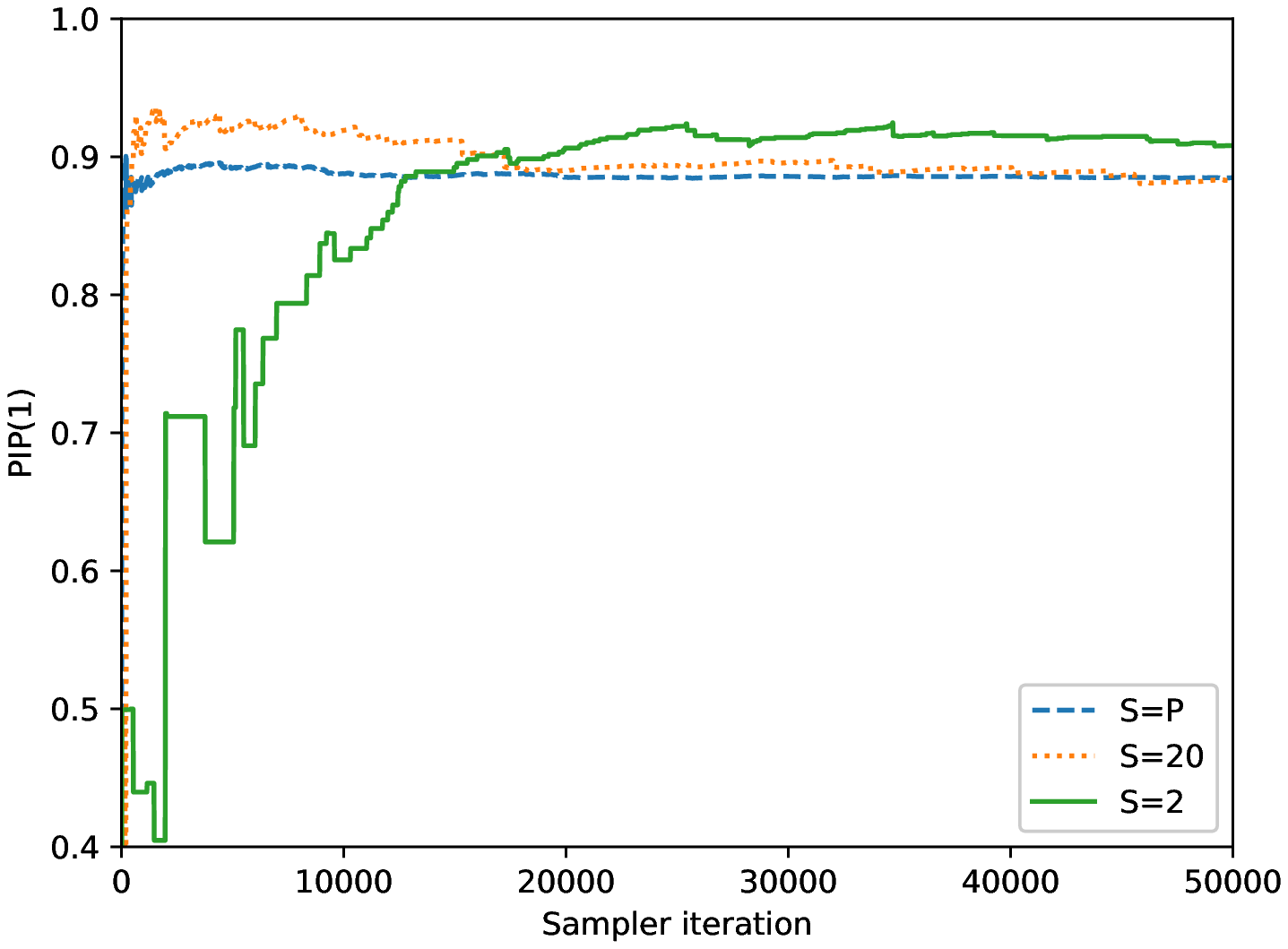}
		\label{fig:3b}
	\end{subfigure}
	\caption{VC-wTGS Rao-Blackwellized Estimators (ALG \ref{ALG1})}
	\label{fig:PIPsII}
\end{figure}

\begin{figure}[h!]
	\centering
	\begin{subfigure}[b]{0.49\textwidth}
		\centering
		\includegraphics[width=\textwidth]{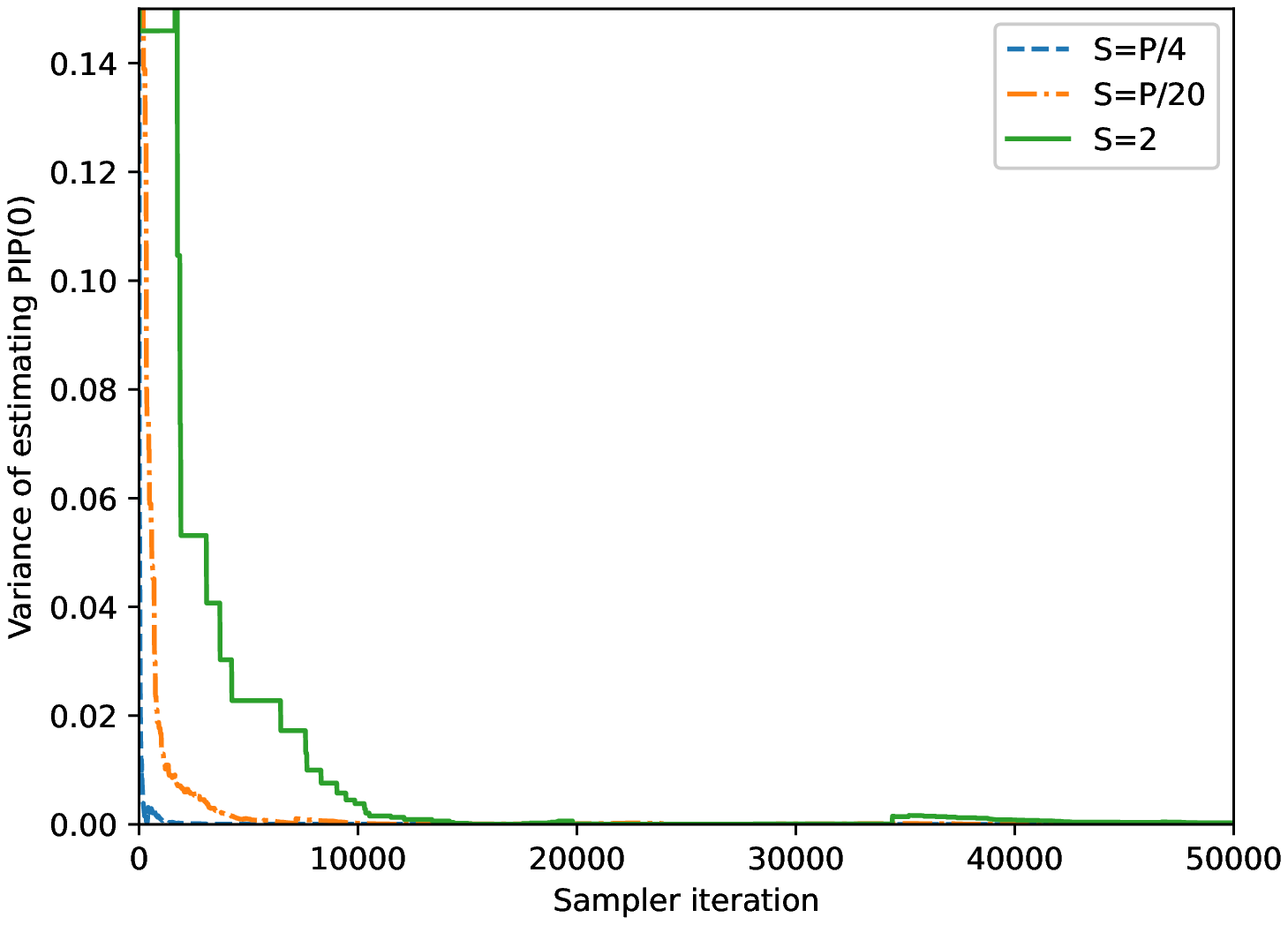}
		\label{fig:5amnist}
	\end{subfigure}
	\begin{subfigure}[b]{0.49\textwidth}
		\centering
		\includegraphics[width=\textwidth]{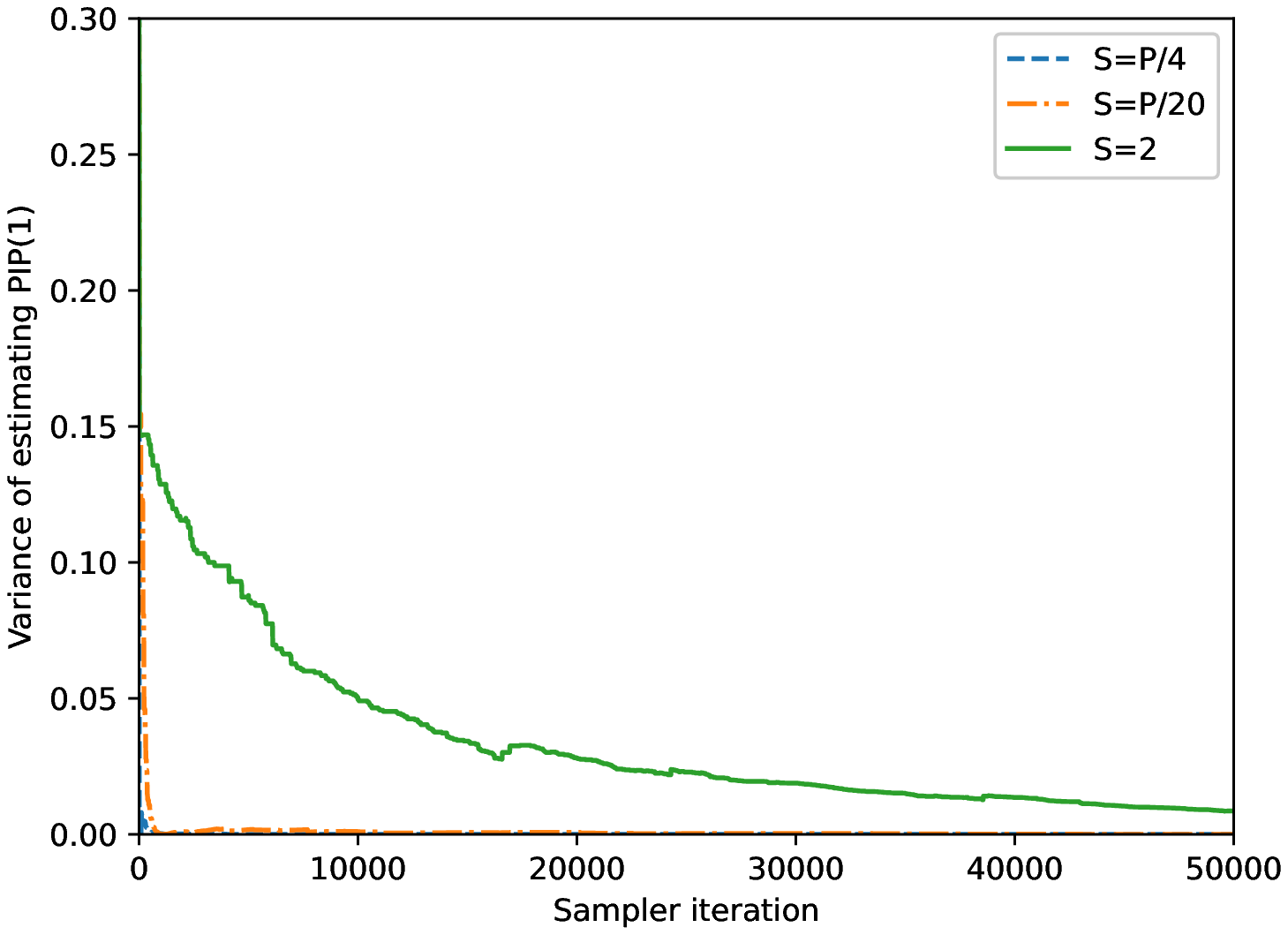}
		\label{fig:5bmnist}
	\end{subfigure}   
	\caption{The variance of VC-wTGS Rao-Blackwellized Estimators}
	\label{fig:variancesMNIST}
\end{figure}
\begin{figure}[h!]
	\centering
	\begin{subfigure}[b]{0.49\textwidth}
		\centering
		\includegraphics[width=\textwidth]{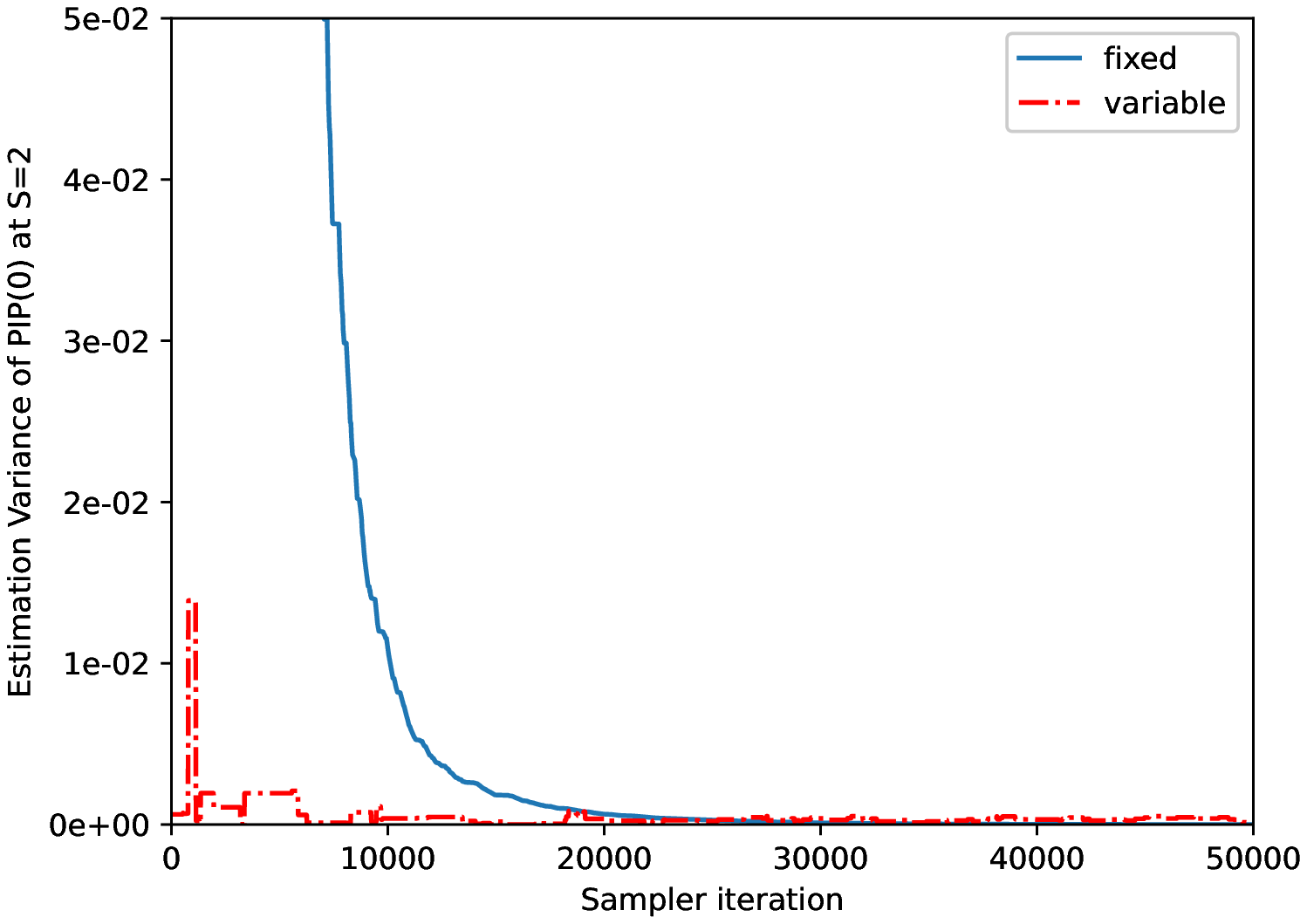}
		\label{fig:6amnist}
	\end{subfigure}
	\begin{subfigure}[b]{0.49\textwidth}
		\centering
		\includegraphics[width=\textwidth]{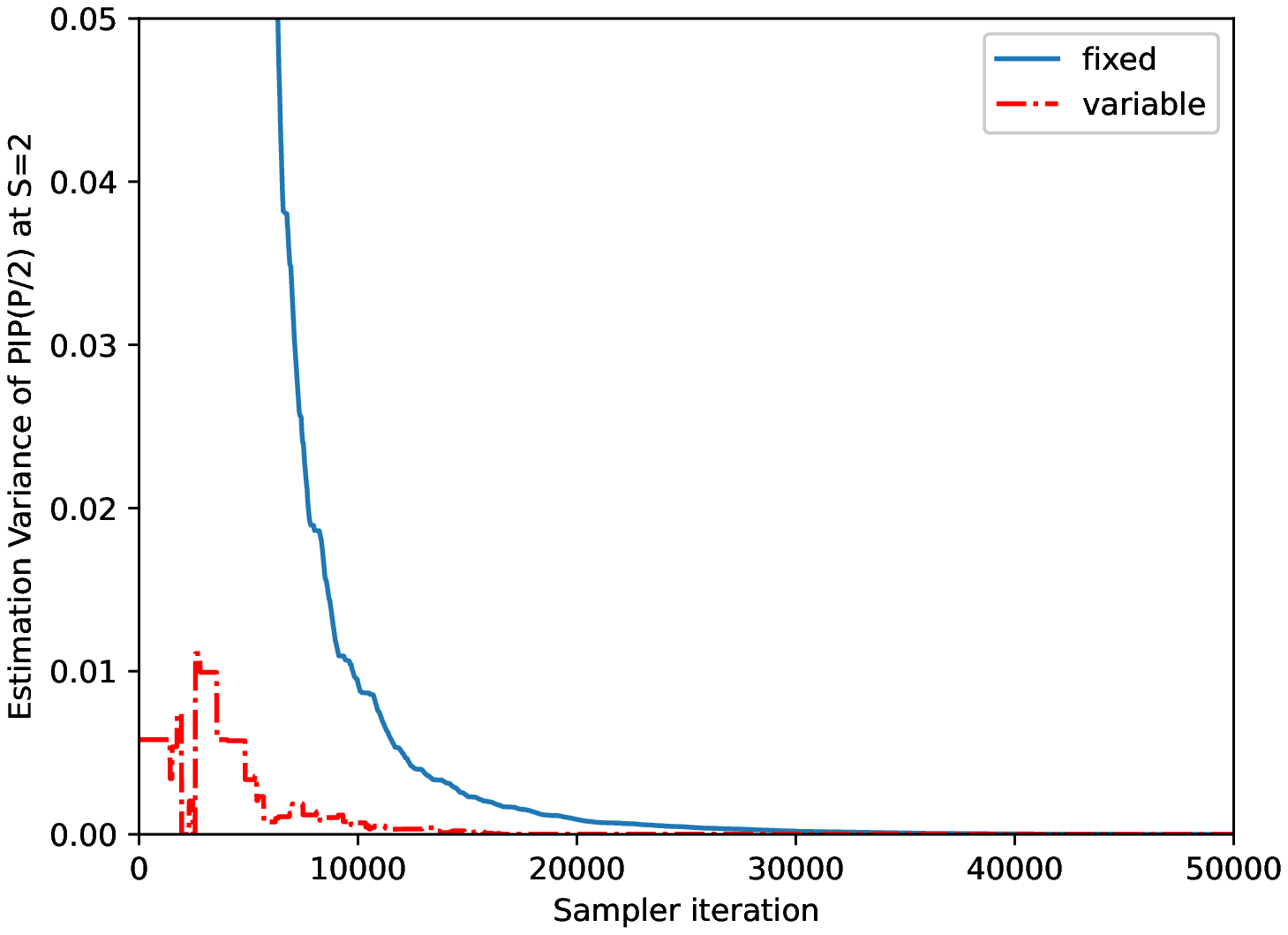}
		\label{fig:6bmnist}
	\end{subfigure}  
	\caption{Comparing the variance between subset wTGS and VC-wTGS at $S=2$.}
	\label{fig:PIPvariancesMNIST}
\end{figure}
\begin{appendices}
\section{Proof of Lemma \ref{lem0}}\label{lem0:proof}
Observe that with probability at least $1-\alpha$, we have
	\begin{align}
	(1-\eps) \bbE[U] &\leq U \leq (1+\eps) \bbE[U]\\
	(1-\eps) \bbE[V] &\leq V \leq (1+\eps) \bbE[V].
	\end{align}
	Hence, we have
	\begin{align}
	\bigg(\frac{1-\eps}{1+\eps}\bigg)\frac{\bbE[U]}{\bbE[V]}\leq \frac{U}{V}\leq \bigg(\frac{1+\eps}{1-\eps}\bigg)\frac{\bbE[U]}{\bbE[V]} \label{A1}. 
	\end{align}
	From \eqref{A1}, with probability at least $1-\alpha$, we have
	\begin{align}
	\bigg|\frac{U}{V}-\frac{\bbE[U]}{\bbE[V]}\bigg|\leq \frac{2\eps}{1-\eps}\bigg(\frac{\bbE[U]}{\bbE[V]}\bigg) \label{A2}. 
	\end{align}
	It follows from \eqref{A2} that
	\begin{align}
	\bbE\bigg[\bigg|\frac{U}{V}-\frac{\bbE[U]}{\bbE[V]}\bigg|^2\bigg]&=\bbE\bigg[\bigg|\frac{U}{V}-\frac{\bbE[U]}{\bbE[V]}\bigg|^2\bigg|D\bigg]\bbP(D)+ \bbE\bigg[\bigg|\frac{U}{V}-\frac{\bbE[U]}{\bbE[V]}\bigg|^2\bigg|D^c\bigg]\bbP(D^c)\\
	&\leq \frac{4\eps^2}{(1-\eps)^2}\bigg(\frac{\bbE[U]}{\bbE[V]}\bigg)^2+ \bigg[\max\bigg(M, \frac{\bbE[U]}{\bbE[V]}\bigg)\bigg]^2\alpha. 
	\end{align}
\section{Proof of Lemma \ref{aux:lem1}}\label{aux:lem1proof}
The transition kernel for the sequence $\{\gamma^{(t)}\}$ can be written as
	\begin{align}
	K^*(\gamma \to \gamma')=\frac{S}{P}\sum_{j=1}^P f(j|\gamma)\delta(\gamma'-\texttt{flip}(\gamma|j))+ \bigg(1-\frac{S}{P}\bigg)\delta(\gamma'-\gamma). 
	\end{align}
This implies that for any pair $(\gamma, \gamma')$ such that $\gamma'= \texttt{flip}(\gamma|i)$ for some $i \in [P]$, we have
	\begin{align}
	K^*(\gamma \to \gamma')&=\frac{S}{P}\sum_{j=1}^P f(j|\gamma)\delta(\gamma'-\texttt{flip}(\gamma|j))\\
	&= \frac{S}{P} f(i|\gamma).
	\end{align}
	Now, by ALG \ref{ALG1}, we also have
	\begin{align}
	f(i|\gamma)&=\phi^{-1}(\gamma) \frac{\frac{1}{2}\eta(\gamma_{-i})}{p(\gamma_i|\gamma_{-i},\calD)} \label{mat1}
	\end{align}
	and
	\begin{align}
   f(i|\gamma')&=\phi^{-1}(\gamma') \frac{\frac{1}{2}\eta(\gamma'_{-i})}{p(\gamma'_i|\gamma'_{-i},\calD)} \label{mat2}.
	\end{align}
From \eqref{mat1} and \eqref{mat2} and $\gamma_{-i}=\gamma'_{-i}$, we obtain
	\begin{align}
	\frac{K^*(\gamma \to \gamma')}{K^*(\gamma'\to \gamma)}&=\frac{\frac{S}{P}f(i|\gamma)}{\frac{S}{P}f(i|\gamma')}\\
	&=\frac{f(i|\gamma)}{f(i|\gamma')}\\
	&= \frac{\phi(\gamma') p(\gamma'|\calD)}{\phi(\gamma) p(\gamma|\calD)}\\
	&=\frac{f(\gamma')}{f(\gamma)} \label{amo0}. 
	\end{align}
In addition, we also have $K^*(\gamma \to \gamma')=K^*(\gamma'\to \gamma)=0$ if $\gamma'\neq \gamma$ and $\gamma'\neq \texttt{flip}(\gamma|i)$ for any $i \in [P]$. Furthermore, $K^*(\gamma \to \gamma')=K^*(\gamma'\to \gamma)=1-\frac{S}{P}$ if $\gamma=\gamma'$. 

By combining all these cases, it holds that
	\begin{align}
	f(\gamma) K^*(\gamma \to \gamma')=f(\gamma') K^*(\gamma'\to \gamma) \label{amo}
	\end{align} for all $\gamma',\gamma$.
	
This means that $\{\gamma^{(t)}\}_{t=1}^T$ form a reversible Markov chain with stationary distribution $f(\gamma)/Z_f$ where
	\begin{align}
	Z_f=\sum_{\gamma} f(\gamma). 
	\end{align}
Since $\{Q_t\}_{t=1}^T$ is an i.i.d. Bernoulli sequence with $q(1)=S/P$ and independent of $\{\gamma^{(t)}\}_{t=1}^T$, 
	$\{\gamma^{(t)},Q^{(t)}\}_{t=1}^T$ forms a Markov chain with the transition kernel satisfying:
	\begin{align}
	K((\gamma,Q)\to (\gamma',Q'))=q(Q')K^*(\gamma \to \gamma') \label{lubu}.
	\end{align}
It follows from \eqref{lubu} that
	\begin{align}
	q(Q) f(\gamma)/Z_f K((\gamma,Q)\to (\gamma',Q'))=\big[K^*(\gamma \to \gamma')f(\gamma)/Z_f\big] q(Q) q(Q') \label{lubu2}
	\end{align} for any pair $(\gamma,Q)$ and $(\gamma',Q')$. 

Finally, from \eqref{amo} and \eqref{lubu2}, we have
	\begin{align}
	q(Q) f(\gamma)/Z_f K((\gamma,Q)\to (\gamma',Q'))=q(Q') f(\gamma)/Z_f K((\gamma',Q')\to (\gamma,Q)).
	\end{align}  
This means that $\{\gamma_t,Q^{(t)}\}_{t=1}^T$ forms a reversible Markov chain with stationary distribution $q(Q) f(\gamma)/Z_f$.
\section{Proof of Lemma \ref{lem2}} \label{lem2:proof}
First, by definition of $\hat{\phi}(\gamma)$ in \eqref{defhphi} we have
	\begin{align}
	\rho^{(t)}= \frac{ \hat{\phi}(\gamma^{(t)})}{\sum_{t=1}^T\hat{\phi}\big(\gamma^{(t)}\big)}.
	\end{align}	
In addition, observe that
	\begin{align}
	0\leq \hat{\phi}(\gamma)\leq 1.
	\end{align}
Now, let $g: \{0,1\}^P \to \bbR_+$ such that $g(\gamma) \leq 1$ for all $\gamma$. Then, by applying Lemma \ref{hoeffinglem} and a change of measure, with probability $1-2\frac{d\nu}{d\pi}\exp(-\frac{\zeta^2 T(1-\lambda)}{64 e})$, we have
	\begin{align}
	\frac{1}{T}\bigg|\sum_{t=1}^T\hat{\phi}(\gamma^{(t)}) g(\gamma^{(t)})Q^{(t)}-\bbE_{\pi}\bigg[\sum_{t=1}^T\hat{\phi}(\gamma^{(t)}) g(\gamma^{(t)})Q^{(t)}\bigg]\bigg|\leq \zeta \label{B2mod}
	\end{align} for any $\zeta>0$. 
	
	Similarly, by using Lemma \ref{hoeffinglem}, with probability at least $1-2\frac{d\nu}{d\pi}\exp(-\frac{\zeta^2 T(1-\lambda)}{64 e})$, it holds that
	\begin{align}
	\frac{1}{T}\bigg|\sum_{t=1}^T \hat{\phi}(\gamma^{(t)})Q^{(t)}-\bbE_{\pi}\bigg[\sum_{t=1}^T \hat{\phi}(\gamma^{(t)})Q^{(t)}\bigg]\bigg|\leq \zeta \label{B4mod}. 
	\end{align}
By using the union bound, with probability at least $1-4\frac{d\nu}{d\pi}\exp(-\frac{\zeta^2 T(1-\lambda)}{64 e})$, it holds that
	\begin{align}
	\frac{1}{T}\bigg|\sum_{t=1}^T\hat{\phi}(\gamma^{(t)}) g(\gamma^{(t)})Q^{(t)}-\bbE_{\pi}\bigg[\sum_{t=1}^T\hat{\phi}(\gamma^{(t)}) g(\gamma^{(t)})Q^{(t)}\bigg]\bigg|\leq \zeta,\\
	\frac{1}{T}\bigg|\sum_{t=1}^T \hat{\phi}(\gamma^{(t)})-\bbE_{\pi}\bigg[\sum_{t=1}^T \hat{\phi}(\gamma^{(t)})\bigg]\bigg|\leq \zeta \label{U6mod}.
	\end{align}
Now, by setting $\zeta=\zeta_0:= \frac{\eps}{T}\min\big\{\bbE_{\pi}\big[\sum_{t=1}^T\hat{\phi}(\gamma^{(t)}) g(\gamma^{(t)})Q^{(t)}\big],\bbE_{\pi}\big[\sum_{t=1}^T \hat{\phi}(\gamma^{(t)})\big] \big\}$ for some $\eps>0$ (to be chosen later), with probability at least $1-4\frac{d\nu}{d\pi}\exp(-\frac{\zeta_0^2 T(1-\lambda)}{64 e})$, it holds that
	\begin{align}
	&\frac{1}{T}\bigg|\sum_{t=1}^T\hat{\phi}(\gamma^{(t)}) g(\gamma^{(t)})Q^{(t)}-\bbE_{\pi}\bigg[\sum_{t=1}^T\hat{\phi}(\gamma^{(t)}) g(\gamma^{(t)})Q^{(t)}\bigg]\bigg|\nn\\
	&\qquad \leq \frac{\eps}{T} \bbE_{\pi}\bigg[\sum_{t=1}^T\hat{\phi}(\gamma^{(t)}) g(\gamma^{(t)})Q^{(t)}\bigg],\\
	&\frac{1}{T}\bigg|\sum_{t=1}^T \hat{\phi}(\gamma^{(t)})Q^{(t)}-\bbE_{\pi}\bigg[\sum_{t=1}^T \hat{\phi}(\gamma^{(t)})Q^{(t)}\bigg]\bigg|\leq \frac{\eps}{T}\bbE_{\pi}\bigg[\sum_{t=1}^T \hat{\phi}(\gamma^{(t)})Q^{(t)}\bigg]  \label{U7mod}.
	\end{align}
Furthermore, by setting
	\begin{align} U&:=\frac{1}{T}\sum_{t=1}^T\hat{\phi}(\gamma^{(t)}) g(\gamma^{(t)})Q^{(t)},\\
	V&:=\frac{1}{T}\sum_{t=1}^T \hat{\phi}(\gamma^{(t)})Q^{(t)},
	\end{align}
	we have
	\begin{align}
	\frac{U}{V}&=\frac{ \sum_{t=1}^T\phi^{-1}(\gamma^{(t)}) g(\gamma^{(t)})Q^{(t)}}{\sum_{t=1}^T \phi^{-1}(\gamma^{(t)})Q^{(t)}}\\
	&=\sum_{t=1}^T \rho^{(t)} g(\gamma^{(t)})
	\end{align}
	and
	\begin{align}
	M:=\sup(U/V)\leq 1 \label{eqU6ma}
	\end{align} since $\sum_{t=1}^T \rho^{(t)}=1$ and $g(\gamma^{(t)})\leq 1$ for all $\gamma^{(t)}$.
	
From \eqref{U6mod}-\eqref{eqU6ma}, by Lemma \ref{lem0}, we have
	\begin{align}
	\bbE\bigg[\bigg|\sum_{t=1}^T\rho^{(t)} g(\gamma^{(t)})Q^{(t)}-\frac{\bbE_{\pi}[U]}{\bbE_{\pi}[V]}\bigg|^2\bigg]\leq  \frac{4\eps^2}{(1-\eps)^2}\bigg(\frac{\bbE_{\pi}[U]}{\bbE_{\pi}[V]}\bigg)^2+ \bigg[\max\bigg(1, \frac{\bbE_{\pi}[U]}{\bbE_{\pi}[V]}\bigg)\bigg]^2\alpha,
	\end{align} where $\alpha:=4\frac{d\nu}{d\pi}\exp\big(-\frac{\eps^2 T(1-\lambda_{\gamma,Q})\min\{\bbE_{\pi}[U],\bbE_{\pi}[V]\}^2}{64 e}\big)$, 
where $\lambda_{\gamma,Q}$ is the stationary distribution of the reversible Markov chain $\{\gamma^{(t)},Q^{(t)}\}$. 

Now, by setting
	\begin{align}
	\eps=\eps_0&=\frac{1}{\min\{\bbE_{\pi}[U],\bbE_{\pi}[V]\}} \sqrt{\frac{64 e \log T}{(1-\lambda_{\gamma,Q})T}},
	\end{align} we have $\alpha=4 \frac{d\nu}{d\pi} \frac{1}{T}$. Then, we obtain
	\begin{align}
	\bbE\bigg[\bigg|\sum_{t=1}^T\rho^{(t)} g(\gamma^{(t)})-\frac{\bbE_{\pi}[U]}{\bbE_{\pi}[V]}\bigg|^2\bigg]\leq \frac{4 \eps_0^2}{(1-\eps_0)^2}\bigg(\frac{\bbE_{\pi}[U]}{\bbE_{\pi}[V]}\bigg)^2+ \bigg[\max\bigg(1, \frac{\bbE_{\pi}[U]}{\bbE_{\pi}[V]}\bigg)\bigg]^2\alpha \label{C1mod}.
	\end{align}
Now, observe that
	\begin{align}
	\frac{\bbE_{\pi}[U]}{\bbE_{\pi}[V]}&=\frac{\bbE_{\pi}\big[g(\gamma)Q\hat{\phi}(\gamma)\big]}{\bbE_{\pi}\big[\hat{\phi}(\gamma)Q\big]}\\
	&=\frac{\bbE_{\pi}\big[g(\gamma)Q\phi^{-1}(\gamma)\big]}{\bbE_{\pi}\big[\phi^{-1}(\gamma)Q\big]} \label{E1mod}.
	\end{align}
On the other hand, by Lemma \ref{aux:lem1}, we have $\pi(\gamma,Q)= \frac{q(Q) f(\gamma)}{Z_f}$ where $Z_f:=\sum_{\gamma} f(\gamma)$ and $f(\gamma)=p(\gamma|\calD) \phi(\gamma)$.  It follows that
	\begin{align}
	\bbE_{\pi}\big[g(\gamma)Q\phi^{-1}(\gamma)\big]&=\bbE_{q(Q)f(\gamma)/Z_f}\big[g(\gamma)Q\phi^{-1}(\gamma)\big]\\
	&=\sum_{\gamma}\sum_Q g(\gamma) Q\phi^{-1}(\gamma) \frac{f(\gamma)}{Z_f} q(Q)\\
	&=\frac{1}{Z_f}\sum_{\gamma}\sum_Q g(\gamma) q(Q) Q p(\gamma|\calD) \\
	&=\frac{1}{Z_f}\bbE_{p(\gamma|\calD)}\big[g(\gamma) \big] \bbE_q[Q]
	\label{E2mod}.
	\end{align}
	Similarly, we have
	\begin{align}
	\bbE_{\pi}\big[\phi^{-1}(\gamma)Q\big]&=\bbE_{q(Q)f(\gamma)/Z_f}\big[\phi^{-1}(\gamma)Q\big]\\
	&=\sum_Q \sum_{\gamma}\phi^{-1}(\gamma)Q  \frac{f(\gamma)}{Z_f}q(Q) \\
	&= \frac{1}{Z_f}\bigg(\sum_{\gamma} P(\gamma|\calD) \bigg) \bbE_q[Q]\label{E3mod}.
	\end{align}
	From \eqref{E1mod}, \eqref{E2mod} and \eqref{E3mod}, we obtain
	\begin{align}
	\frac{\bbE_{\pi}[U]}{\bbE_{\pi}[V]}=\bbE_{p(\gamma|\calD)}\big[g(\gamma)\big] \label{xmo}. 
	\end{align}

For the given problem, by setting $g(\gamma)=p(\gamma_i=1| \gamma_{-i},\calD)$, from \eqref{xmo}, we have
\begin{align}
\frac{\bbE_{\pi}[U]}{\bbE_{\pi}[V]}= \texttt{PIP}(i) \label{eq68}.  
\end{align}
In addition, we have
\begin{align}
\bbE_{\pi}[V]&= \bbE_{\pi}\big[\hat{\phi}(\gamma)Q\big]\\
&=\sum_{\gamma,Q} \hat{\phi}(\gamma)Q \frac{f(\gamma)}{Z_f} q(Q)\\
&=\bigg(\sum_{\gamma} \hat{\phi}(\gamma)\frac{f(\gamma)}{Z_f}\bigg)\bigg(\sum_Q Q q(Q)\bigg)\\
&=\bbE_{\pi}[\hat{\phi}(\gamma)] \bbE_Q[Q]\\
&= \frac{S}{P}\bbE_{\pi}[\hat{\phi}(\gamma)] \label{hg1}. 
\end{align}
Hence, we obtain
\begin{align}
\min\{\bbE_{\pi}[U],\bbE_{\pi}[V] \}&=\bbE_{\pi}[V]\min\bigg\{1,\frac{\bbE_{\pi}[U]}{\bbE_{\pi}[V]} \bigg\}\\
&=\bbE_{\pi}[V]\min\bigg\{1, \texttt{PIP}(i) \bigg\}\\
&= \bbE_{\pi}[V]\texttt{PIP}(i) \\
&= \frac{S}{P}\bbE_{\pi}[\hat{\phi}(\gamma)]\texttt{PIP}(i) \label{eq69}.
\end{align}
From \eqref{C1mod}, \eqref{eq68}, and \eqref{eq69}, we have
\begin{align}
\bbE\bigg[\bigg|\sum_{t=1}^T\rho^{(t)} p(\gamma_i^{(t)}=1| \gamma_{-i}^{(t)},\calD)-\texttt{PIP}(i) \bigg|^2\bigg] \leq   \frac{4\eps_0^2 }{(1-\eps_0)^2}\texttt{PIP}^2(i) + 4\frac{d\nu}{d\pi}\frac{1}{T} \label{eq98},
\end{align}
and
\begin{align}
\eps_0=\frac{P}{\texttt{PIP}(i)\bbE_{\pi}[\hat{\phi}(\gamma)]S} \sqrt{\frac{64 e \log T}{(1-\lambda_{\gamma,Q})T}} \label{defeps0}.
\end{align}
Now, observe that
\begin{align}
\frac{d\nu}{d\pi}(\gamma,Q)&= \frac{p_{\gamma_1,Q_1}(\gamma,Q)}{\pi(\gamma,Q)}\\
&\leq \frac{1}{\pi(\gamma,Q)}\\
&= \frac{1}{\pi(\gamma)q(Q)}\\
&\leq \frac{P}{S} \frac{1}{\min_{\gamma} \pi(\gamma)} \label{eq99}. 
\end{align}
By combining \eqref{eq98} and \eqref{eq99}, we have
\begin{align}
\bbE\bigg[\bigg|\sum_{t=1}^T\rho^{(t)} p(\gamma_i^{(t)}=1| \gamma_{-i}^{(t)},\calD)-\texttt{PIP}(i) \bigg|^2\bigg] \leq \frac{4\eps_0^2}{(1-\eps_0)^2}\texttt{PIP}^2(i) + \frac{4P}{S} \frac{1}{\min_{\gamma} \pi(\gamma)T} \label{smes}.
\end{align}  
 
\section{Derive $p(\gamma_i|\calD, \gamma_{-i})$} \label{posterapp}
 Observe that
\begin{align}
p(\gamma_i|\calD,\gamma_{-i})&= \frac{p(\gamma_i|\calD,\gamma_{-i})}{p(1-\gamma_i|\calD,\gamma_{-i})}\bigg(1+ \frac{p(\gamma_i|\calD,\gamma_{-i})}{p(1-\gamma_i|\calD,\gamma_{-i})}\bigg)^{-1}.
\end{align}
In addition, we have
\begin{align}
\frac{p(\gamma_i=1|\calD,\gamma_{-i})}{p(\gamma_i=0|\calD,\gamma_{-i})}&= \frac{p(\gamma_i=1,\calD|\gamma_{-i})}{p(\gamma_i=0,\calD|\gamma_{-i})}\\
&= \frac{p(\gamma_i=1|\gamma_{-i},X)}{p(\gamma_i=0|\gamma_{-i},X)}\frac{p(Y|\gamma_i=1,\gamma_{-i},X)}{p(Y|\gamma_i=0,\gamma_{-i},X)}\\
&= \bigg(\frac{p(\gamma_i=1)}{p(\gamma_i=0)}\bigg)\bigg(\frac{p(Y|\gamma_i=1,\gamma_{-i},X)}{p(Y|\gamma_i=0,\gamma_{-i},X)}\bigg)\\
&= \bigg(\frac{h}{1-h}\bigg)\bigg(\frac{p(Y|\gamma_i=1,\gamma_{-i},X)}{p(Y|\gamma_i=0,\gamma_{-i},X)}\bigg) \label{C1}. 
\end{align}
On the other hand, for any tuple $\gamma=(\gamma_1,\gamma_2,\cdots,\gamma_P)$ such that $\gamma_i=1$ (so $|\gamma|\geq 1$), we have
\begin{align}
p(Y|\gamma_i=1,\gamma_{-i},\beta_{\gamma},\sigma_{\gamma}^2,X)&=\frac{1}{\big(\sigma_{\gamma} \sqrt{2\pi}\big)^N}\exp\bigg(-\frac{\|Y-X_{\gamma}\beta_{\gamma}\|^2}{2\sigma_{\gamma}^2}\bigg) \label{C3}.
\end{align}
It follows that
\begin{align}
&p(Y|\gamma_i=1,\gamma_{-i},X\big)\nn\\
& = \int_{\beta_{\gamma}} \int_{\sigma_{\gamma}^2=0}^{\infty} \frac{1}{\big(\sigma_{\gamma} \sqrt{2\pi}\big)^N}\exp\bigg(-\frac{\|Y-X_{\gamma}\beta_{\gamma}\|^2}{2\sigma_{\gamma}^2}\bigg)p(\beta_{\gamma}|\gamma_i=1,\gamma_{-i})p(\sigma_{\gamma}^2|\gamma_i=1,\gamma_{-i}) d\beta_{\gamma} d\sigma_{\gamma}^2\\
&= \int_{\sigma_{\gamma}^2=0}^{\infty}\texttt{InvGamma}\bigg(\frac{1}{2}\nu_0, \frac{1}{2}\nu_0 \lambda_0\bigg)\int_{\beta_{\gamma}} \frac{1}{\big(\sigma_{\gamma} \sqrt{2\pi}\big)^N}\exp\bigg(-\frac{\|Y-X_{\gamma}\beta_{\gamma}\|^2}{2\sigma_{\gamma}^2}\bigg)\nn\\
&\qquad   \times  \frac{1}{\big(\sigma_{\gamma}\sqrt{2\pi \tau^{-1}}\big)^{|\gamma|}}\exp\bigg(-\frac{\|\beta_{\gamma}\|^2}{2 \sigma_{\gamma}^2 \tau^{-1}}\bigg)
d\beta_{\gamma} d\sigma_{\gamma}^2 \label{M00}.
\end{align}
Now, observe that
\begin{align}
&\|Y-X_{\gamma} \beta_{\gamma}\|^2+ \tau \|\beta_{\gamma}\|^2\nn\\
&\qquad =(Y-X_{\gamma}\beta_{\gamma})^T(Y-X_{\gamma}\beta_{\gamma}) + \tau \beta_{\gamma}^T \beta_{\gamma}\\
&\qquad = Y^T Y -2 Y^T X_{\gamma} \beta_{\gamma} + \beta_{\gamma}^T X_{\gamma}^T X_{\gamma} \beta_{\gamma} + \tau \beta_{\gamma}^T \beta_{\gamma}\\
&\qquad = Y^T Y -2 Y^T X_{\gamma} \beta_{\gamma}+ \beta_{\gamma}^T (X_{\gamma}^T X_{\gamma} +\tau I)\beta_{\gamma} \label{D1}. 
\end{align}
Now, consider the EVD (singular value decomposition) of the positive definite matrix $X_{\gamma}^T X_{\gamma} +\tau I$ (note that $\tau>0$):
\begin{align}
X_{\gamma}^T X_{\gamma}+ \tau I= U^T \Lambda U 
\end{align} where $\Lambda$ is the a diagonal matrix consisting of all positive eigenvalue of $X_{\gamma}^T X_{\gamma}+ \tau I$. Let
\begin{align}
\tbeta_{\gamma}&:= \sqrt{\Lambda} U \beta_{\gamma},\\
\tilY_{\gamma}&:=\sqrt{\Lambda^{-1}}U  X_{\gamma}^T Y.  
\end{align}
Then, we have
\begin{align}
&\|Y-X_{\gamma} \beta_{\gamma}\|^2+ \tau \|\beta_{\gamma}\|^2\nn\\
&\qquad= Y^T Y -2 Y^T X_{\gamma} \beta_{\gamma}+ \beta_{\gamma}^T (X_{\gamma}^T X_{\gamma} +\tau I)\beta_{\gamma}\\
&\qquad=  Y^T Y -2 Y^T X_{\gamma} \sqrt{\Lambda^{-1}}U^T\tbeta_{\gamma}+ \tbeta_{\gamma}^T \tbeta_{\gamma}\\
&\qquad=  Y^T Y -2 \tilY_{\gamma}^T \tbeta_{\gamma}+ \tbeta_{\gamma}^T \tbeta_{\gamma}\\
&\qquad= \big( \|Y\|^2-\|\tilY_{\gamma}|^2\big)+ \big(\tilY_{\gamma}^T \tilY_{\gamma} -2 \tilY_{\gamma}^T \tbeta_{\gamma}+ \tbeta_{\gamma}^T \tbeta_{\gamma}\big)\\
&\qquad=  \big( \|Y\|^2-\|\tilY_{\gamma}|^2\big)+ \|\tilY_{\gamma}-\tbeta_{\gamma}\|^2 \label{M1}.
\end{align}
Hence, we have
\begin{align}
d\beta_{\gamma}&= \det(U^T \Lambda^{-1/2})  d\tbeta_{\gamma}\\
&= \det(X_{\gamma}^T X_{\gamma} +\tau I)^{-1/2}d\tbeta_{\gamma} \label{M2}.
\end{align}
Hence, we have
\begin{align}
&\int_{\beta_{\gamma}} \frac{1}{\big(\sigma_{\gamma} \sqrt{2\pi}\big)^N}\exp\bigg(-\frac{\|Y-X_{\gamma}\beta_{\gamma}\|^2}{2\sigma_{\gamma}^2}\bigg)  \frac{1}{\big(\sigma_{\gamma}\sqrt{2\pi \tau^{-1}}\big)^{|\gamma|}}\exp\bigg(-\frac{\|\beta_{\gamma}\|^2}{2 \sigma_{\gamma}^2 \tau^{-1}}\bigg)
d\beta_{\gamma}\\
&\qquad =\int_{\tbeta_{\gamma}} \frac{1}{\big(\sigma_{\gamma} \sqrt{2\pi}\big)^N}\exp\bigg(-\frac{ \big( \|Y\|^2-\|\tilY_{\gamma}|^2\big)+ \|\tilY_{\gamma}-\tbeta_{\gamma}\|^2}{2\sigma_{\gamma}^2}\bigg)  \nn\\
&\qquad\qquad  \times \frac{1}{\big(\sigma_{\gamma}\sqrt{2\pi \tau^{-1}}\big)^{|\gamma|}}
\det(X_{\gamma}^T X_{\gamma} +\tau I)^{-1/2}d\tbeta_{\gamma}\\
&\qquad= \frac{1}{\big(\sigma_{\gamma} \sqrt{2\pi}\big)^N} \tau^{|\gamma|/2}\exp\bigg(-\frac{ \big( \|Y\|^2-\|\tilY_{\gamma}|^2\big)}{2\sigma_{\gamma}^2}\bigg) \det(X_{\gamma}^T X_{\gamma} +\tau I)^{-1/2} \label{M4}.
\end{align}
By combining \eqref{M00} and \eqref{M4}, we obtain
\begin{align}
&p(Y|\gamma_i=1,\gamma_{-i},X\big)\nn\\
& = \int_{\beta_{\gamma}} \int_{\sigma_{\gamma}^2=0}^{\infty} \frac{1}{\big(\sigma_{\gamma} \sqrt{2\pi}\big)^N}\exp\bigg(-\frac{\|Y-X_{\gamma}\beta_{\gamma}\|^2}{2\sigma_{\gamma}^2}\bigg)p(\beta_{\gamma}|\gamma_i=1,\gamma_{-i})p(\sigma_{\gamma}^2|\gamma_i=1,\gamma_{-i}) d\beta_{\gamma} d\sigma_{\gamma}^2\\
&= \int_{\sigma_{\gamma}^2=0}^{\infty}\texttt{InvGamma}\bigg(\frac{1}{2}\nu_0, \frac{1}{2}\nu_0 \lambda_0\bigg)\frac{1}{\big(\sigma_{\gamma} \sqrt{2\pi}\big)^N} \tau^{|\gamma|/2}\nn\\
&\qquad \times \exp\bigg(-\frac{ \big( \|Y\|^2-\|\tilY_{\gamma}|^2\big)}{2\sigma_{\gamma}^2}\bigg) \det(X_{\gamma}^T X_{\gamma} +\tau I)^{-1/2} d\sigma_{\gamma}^2 \label{M0}\\
&\qquad= \det(X_{\gamma}^T X_{\gamma} +\tau I)^{-1/2}\tau^{|\gamma|/2} (2\pi)^{-N/2}\int_{\sigma_{\gamma}^2=0}^{\infty}\texttt{InvGamma}\bigg(\frac{1}{2}\nu_0, \frac{1}{2}\nu_0 \lambda_0\bigg) (\sigma_{\gamma}^2)^{-N/2} \nn\\
&\qquad \qquad \times \exp\bigg(-\frac{ \big( \|Y\|^2-\|\tilY_{\gamma}\|^2\big)}{2\sigma_{\gamma}^2}\bigg)  d\sigma_{\gamma}^2\\
&\qquad= \det(X_{\gamma}^T X_{\gamma} +\tau I)^{-1/2}\tau^{|\gamma|/2} (2\pi)^{-N/2}\nn\\
&\qquad \times \int_{\sigma_{\gamma}^2=0}^{\infty}
\frac{(1/2 \lambda_0 \nu_0)^{1/2\nu_0}}{\Gamma(1/2 \nu_0)} (1/\sigma_{\gamma}^2)^{1/2 \nu_0+1}\exp\bigg(-1/2 \nu_0 \lambda_0/\sigma_{\gamma}^2\bigg)
(\sigma_{\gamma}^2)^{-N/2} \nn\\
&\qquad \qquad \times \exp\bigg(-\frac{ \big( \|Y\|^2-\|\tilY_{\gamma}\|^2\big)}{2\sigma_{\gamma}^2}\bigg)  d\sigma_{\gamma}^2\\
&\qquad= \det(X_{\gamma}^T X_{\gamma} +\tau I)^{-1/2}\tau^{|\gamma|/2} (2\pi)^{-N/2}\frac{(1/2 \lambda_0 \nu_0)^{1/2\nu_0}}{\Gamma(1/2 \nu_0)} \nn\\
&\qquad \times \int_{\sigma_{\gamma}^2=0}^{\infty}(1/\sigma_{\gamma}^2)^{1/2 \nu_0+1+N/2}
\exp\bigg(-\frac{ \big( \|Y\|^2-\|\tilY_{\gamma}\|^2+\nu_0 \lambda_0\big)}{2\sigma_{\gamma}^2}\bigg)  d\sigma_{\gamma}^2\\
&\qquad= \det(X_{\gamma}^T X_{\gamma} +\tau I)^{-1/2}\tau^{|\gamma|/2} (2\pi)^{-N/2}\frac{(1/2 \lambda_0 \nu_0)^{1/2\nu_0}}{\Gamma(1/2 \nu_0)}\nn\\
&\qquad \qquad \times \Gamma\bigg(\frac{N+\nu_0}{2}\bigg)\bigg( \frac{\|Y\|^2-\|\tilY_{\gamma}\|^2+ \nu_0 \lambda_0}{2}\bigg)^{-\frac{N+\nu_0}{2}} \label{M10}. 
\end{align}
Let $\tgamma_1$ is given by $\gamma_{-i}$ with $\gamma_i=1$, $\tgamma_0$ is given by $\gamma_{-i}$ with $\gamma_i=0$. It follows that
\begin{align}
\frac{p(Y|\gamma_i=1,\gamma_{-i},X)}{p(Y|\gamma_i=0,\gamma_{-i},X)}=\sqrt{\tau} \sqrt{\frac{\det(X_{\tgamma_0}^T X_{\tgamma_0} +\tau I)}{\det(X_{\tgamma_1}^T X_{\tgamma_1} +\tau I)}}\bigg(\frac{\|Y\|^2-\|\tilY_{\tgamma_0}\|^2+\nu_0 \lambda_0}{\|Y\|^2-\|\tilY_{\tgamma_1}\|^2+\nu_0 \lambda_0}\bigg)^{\frac{N+\nu_0}{2}} \label{finxo}.
\end{align}
On the other hand, we have
\begin{align}
\|\tilY_{\gamma}\|^2&= \tilY_{\gamma}^T \tilY_{\gamma}\\
&= Y^T X_{\gamma} (X_{\gamma}^T X_{\gamma}+\tau I)^{-1} X_{\gamma}^T Y. 
\end{align}
Hence, we finally have
\begin{align}
\frac{p(Y|\gamma_i=1,\gamma_{-i},X)}{p(Y|\gamma_i=0,\gamma_{-i},X)}=\sqrt{\tau \frac{\det(X_{\tgamma_0}^T X_{\tgamma_0} +\tau I)}{\det(X_{\tgamma_1}^T X_{\tgamma_1} +\tau I)}\bigg(\frac{S_{\tgamma_0}}{S_{\tgamma_1}}\bigg)^{N+\nu_0}} \label{eq171},
\end{align} where
\begin{align}
S_{\gamma}:= Y^T Y - Y^T X_{\gamma} (X_{\gamma}^T X_{\gamma}+\tau I)^{-1} X_{\gamma}^T Y+\nu_0 \lambda_0. 
\end{align}
Based on this, we can estimate
\begin{align}
p(\gamma_i|\calD,\gamma_{-i})&= \frac{p(\gamma_i|\calD,\gamma_{-i})}{p(1-\gamma_i|\calD,\gamma_{-i})}\bigg(1+ \frac{p(\gamma_i|\calD,\gamma_{-i})}{p(1-\gamma_i|\calD,\gamma_{-i})}\bigg)^{-1}.
\end{align}
Denote the set of included variables in $\tgamma_0$ as $I=\{j: \tgamma_{0,j}=1\}$ . Define $F=\big(X_{\tgamma_0}^T X_{\tgamma_0}+\tau I\big)^{-1}$, $\nu=X^T Y$ and $\nu_{\tgamma_0}=(\nu_j)_{j\in I}$. Also define $A=X^T X$ and $a_i=(A_{ji})_{j \in I}$. Then, by using the same arguments as  \cite[Appendix B1]{Zanella2019ScalableIT}, we can show that
\begin{align}
S(\tgamma_1)=S(\tgamma_0)-d_i\big(\nu_{\tgamma_0}^T F a_i-\nu_i\big)^2,
\end{align} where $d_i=(A_{ii}+\tau-a_i^T  F a_i)^{-1}$. In addition, we can compute $a_i^T F a_i$ by using the Cholesky decomposition of $F=LL^T$ and 
\begin{align}
a_i^T F a_i&= \|a_i^T L\|^2\\
&= \sum_{j \in I} (BL)_{ij}^2,
\end{align}  where $B$ is the $p\times |\gamma|$ matrix made of the columns of $A$ corresponding to variables included in $\gamma$. 

In addition, we have
\begin{align}
X_{\tgamma_1}^T X_{\tgamma_1} +\tau I=\begin{pmatrix}X_{\tgamma_0}^T X_{\tgamma_0}+\tau I &a_i\\a_i^T & A_{ii}+\tau \end{pmatrix}
\end{align}
Hence, by using Schur's formula for the determinant of block matrix, we are easy to see that
\begin{align}
\frac{\det(X_{\tgamma_0}^T X_{\tgamma_0} +\tau I)}{\det(X_{\tgamma_1}^T X_{\tgamma_1} +\tau I)}=d_i. 
\end{align}
Using this algorithm, if pre-computing $X^TX$ is not possible, the computational complexity per MCMC iteration is $O(N|\gamma|^2+ |\gamma|^3+P|\gamma|^2)$. Otherwise, if pre-computing $X^TX$ is possible, the computational complexity per MCMC iteration is $O(|\gamma|^3+P|\gamma|^2)$.
\end{appendices}
\bibliography{sn-bibliography}

\end{document}